\setlist[enumerate]{itemsep=0mm}
\def\eqref#1{equation~\ref{#1}}
\def\1{\bm{1}}
\DeclareMathAlphabet{\mathsfit}{\encodingdefault}{\sfdefault}{m}{sl}
\SetMathAlphabet{\mathsfit}{bold}{\encodingdefault}{\sfdefault}{bx}{n}
\newcommand{\E}{\mathbb{E}}
\newcommand{\A}{\mathcal{A}}
\newcommand{\D}{\mathcal{D}}
\newcommand{\OO}{\mathcal{O}}
\newcommand{\revise}[1]{{\color{blue}{#1}}}
\newtheorem{Thm}{Theorem}
\newtheorem{Lm}{Lemma}
\newtheorem{Df}{Definition}
\newcommand\footnoteref[1]{\protected@xdef\@thefnmark{\ref{#1}}\@footnotemark}
\newcommand{\PreserveBackslash}[1]{\let\temp=\\#1\let\\=\temp}
\newcolumntype{C}[1]{>{\PreserveBackslash\centering}p{#1}}
\newcolumntype{R}[1]{>{\PreserveBackslash\raggedleft}p{#1}}
\newcolumntype{L}[1]{>{\PreserveBackslash\raggedright}p{#1}}
\begin{document}

\title{Scalability vs. Utility: Do We Have to Sacrifice One for the Other in Data Importance Quantification?}

\author{Ruoxi Jia$^{1}$
\qquad
Fan Wu$^{2}$\thanks{Equal contribution.}
\qquad
Xuehui Sun$^{3}$\footnotemark[1]
\qquad
Jiacen Xu$^{4}$\footnotemark[1]
\qquad
David Dao$^{5}$ \\
\qquad
Bhavya Kailkhura$^{6}$
\qquad
Ce Zhang$^{5}$
\qquad
Bo Li$^{2}$
\qquad
Dawn Song$^{7}$
\\
$^{1}$Virginia Tech
\qquad
$^{2}$UIUC
\qquad
$^{3}$Shanhai Jiaotong University
\qquad
$^{4}$UC Irvine
\qquad
$^{5}$ETH Zurich
\\
\qquad
$^{6}$Lawrence Livermore National Laboratory
\qquad
$^{7}$UC Berkeley
\\
{\tt\small ruoxijia@vt.edu}
\qquad
{\tt\small \{fanw6,lbo\}@illinois.edu}
\qquad
{\tt\small zidaneandmessi@sjtu.edu.cn}
\qquad
{\tt\small jiacenx@uci.edu}
\\
{\tt\small \{david.dao,ce.zhang\}@inf.ethz.ch}
\qquad
{\tt\small kailkhura1@llnl.gov}
\qquad
{\tt\small dawnsong@gmail.com}
}

\maketitle

\vskip 0.3in

\begin{abstract}
Quantifying the importance of each training point to a learning task is a fundamental problem in machine learning and the estimated importance scores have been leveraged to guide a range of data workflows such as data summarization and domain adaption. One simple idea is to use the leave-one-out error of each training point to indicate its importance. Recent work has also proposed to use the Shapley value, as it defines a unique value distribution scheme that satisfies a set of appealing properties. 
However, calculating Shapley values is often expensive, which limits its applicability in real-world applications at scale. 
Multiple heuristics to improve the scalability of calculating Shapley values have been proposed recently, with the potential risk of compromising their utility in real-world applications. 

\textit{How well do existing data quantification methods perform on existing workflows? How do these methods compare with each other, empirically and theoretically? Must we sacrifice scalability for the utility in these workflows when using these methods?} In this paper, we conduct a novel theoretical analysis comparing the utility of different importance quantification methods, and report extensive experimental studies on existing and proposed workflows such as noisy label detection, watermark removal, data summarization, data acquisition, and domain adaptation. We show that Shapley value approximation based on a $K$NN surrogate over pre-trained feature embeddings obtains comparable utility with existing algorithms while achieving significant scalability improvement, often by orders of magnitude. Our theoretical analysis also justifies its advantage over the leave-one-out error.

The code is available at \url{https://github.com/AI-secure/Shapley-Study}.
\end{abstract}

\section{Introduction}

Understanding the \textit{importance} of a single
training example, relative to other
training examples, to a learning task
is a fundamental problem in
machine learning (ML) which could have profound impact
on a range of applications including interpretability, robustness, data acquisition, data valuation, among others~\cite{jia2019towards,ghorbani2019data,koh2017understanding}. 

In this paper, we are driven by \textit{two} questions
around this fundamental problem. Our contribution
is a novel theoretical analysis and thorough experimental studies towards understanding both questions.

{\bf \ul{Q1: Leave-one-out vs. Shapley?}}
Given a training set $D$, a validation set $D_{val}$
and a learning algorithm $\mathcal{A}$, let the
utility $U_{\mathcal{A}, D_{val}}(D)$ be 
the validation accuracy of the model trained
on $D$ using $\mathcal{A}$, recently 
there have been 
two lines of work in assigning 
relative importance to a data point $z \in D$.

{\bf A. Leave-one-out (LOO)-based Method \& Influence Function.} One natural 
way to assign importance to $z$ is by calculating 
its contribution to the rest of training data:
\begin{small}
\begin{align*}
v_{loo}(z) \propto U_{\mathcal{A}, D_{val}}(D) - U_{\mathcal{A}, D_{val}}(D \backslash \{z\})
\end{align*}
\end{small}
When we need to assign such an importance score to all 
data points in the training set, we need to 
train a large number of models. Thus researchers
have proposed efficient techniques to 
approximate this score, e.g., via influence
function~\cite{koh2017understanding}.

{\bf B. Shapley-based Method.} Another natural way
to assign importance to $z$ is inspired by 
cooperative game theory and to use the Shapley value~\cite{jia2019towards,ghorbani2019data}:
\begin{small}
\begin{align*}
\nu_\text{shap}(z) \propto \frac{1}{N} \sum_{S\subseteq D\setminus\{z\}} \frac{1}{{N-1 \choose |S|}}
\big[U_{\mathcal{A}, D_{val}}(S\cup \{z\})-U_{\mathcal{A}, D_{val}}(S)\big]
\end{align*}
\end{small}
Both approaches have recently 
been explored by researchers and have been 
applied to a range of ML tasks including 
noisy label detection, watermark removal,
data summarization, active data acquisition,
and domain adaptation~\cite{jia2019towards,ghorbani2019data,koh2017understanding}. 

However, one question remains: \textit{What's the relationships and differences,
both theoretically and empirically, between these two lines of approaches?}

{\bf \ul{Q2: Exact Shapley vs. 
Shapley over Surrogates?}}
As we will show in this paper,
Shapley-based methods often outperforms 
leave-one-out-based methods, both
theoretically and empirically.
However, Shapley-based approaches can be expensive
as one needs to train, for general classifiers,
exponentially many models. Thus, many state-of-the-art approaches resort to 
a sampling-based approach~\cite{jia2019towards,ghorbani2019data}
to approximate this score. On the other hand,
a recent work by Jia \textit{et al.}~\cite{jia2019efficient}
has shown that for a certain family of classifiers,
i.e., $K$-Nearest Neighbor ($K$NN),
calculating this score can be done efficiently,
in $\OO(|D| \log |D|)$ time for \textit{all} 
data points in $D$. 

Despite this, there still remains a question: \textit{Can we 
use a $K$-Nearest Neighbor classifier as
a surrogate model to calculate the Shapley
value, and how does it perform on real-world
applications compared with the vanilla exact Shapley
value?}

{\bf \ul{Technical Contributions.}}
In this paper, we take the first step
towards understanding the above questions.
We make contributions on both theoretical
and empirical fronts.

\begin{itemize}[leftmargin=*]

\item We conduct a novel theoretical analysis 
aiming at rigorously analyzing the differences
between the leave-one-out-based and
the Shapley-based methods. Specifically,
we formalize two performance metrics specific to data importance: one focuses on the predictive power of data importance, studying whether it is indicative of a training point's contribution to a random set; the other focuses on the ability of a data to discriminate ``good'' training points 
from ``bad'' ones. We show that for both 
performance metrics, under certain 
technical conditions, 
the Shapley-based method can outperform 
leave-one-out--based approaches. To our best knowledge,
this is the first theoretical analysis
reasoning the relative performances of different
data importance quantification techniques.
%
\item We conduct a thorough empirical study
on a range of ML tasks, including noisy label detection, watermark removal,
data summarization, active data acquisition,
and domain adaptation on different benchmark datasets. Some have been used
by previous work as a use case of data valuation
methods, and some are proposed by us. On these
tasks, we empirically investigate the relative 
performance between (1) leave-one-out--based methods
and Shapley-based methods, and (2) exact Shapley-based
methods and Shapley over $K$NN Surrogates.
%
\item Our empirical study
suggests that the Shapley-over-$K$NN-Surrogates method 
performs well and achieves comparable results with, and often 
outperforms, all other methods in quality while
being orders of magnitude faster. This gives us 
the first practical algorithm 
over large-scale datasets that returns 
useful data importance scores for a range of 
important ML tasks.
%
\end{itemize}

\section{Background: General Frameworks for Data Importance Quantification}

There are two lines of work in estimating 
the importance of a single training point
for supervised learning~\cite{jia2019towards,ghorbani2019data,koh2017understanding}. In this section,
we describe these methods and the $K$NN-surrogate-based method to set the 
context for our analysis in Section~\ref{sec:theory}-\ref{section:exp}.

We first set up the notations to characterize the main ingredients of a supervised learning problem, including the training and validation data, the learning algorithm, and the performance measure. Let $D=\{z_i\}_{i=1}^N$ be the training set, where $z_i$ is a feature-label pair $(x_i,y_i)$, and $D_\text{val}$ be the validation data. Let $\mathcal{A}$ be the learning algorithm which maps a training dataset to a model. Let $U$ be a performance measure which takes as input training data, any learning algorithm, and validation data and returns a score. We write $U(S,\mathcal{A},D_\text{val})$ to denote the performance score of the model trained on a subset $S$ of training data using the learning algorithm $\mathcal{A}$ when testing on $D_\text{val}$. When the learning algorithm and validation data are self-evident, we will suppress the dependence of $U$ on them and just use $U(S)$ for short. Our goal is to assign a score to each training point $z_i$, denoted by $\nu(z_i, D,\mathcal{A},D_\text{val}, U)$, indicating its \textit{importance} to the supervised learning problem specified by $D,\mathcal{A},D_\text{val}, U$. We will often write it as $\nu(z_i)$ or $\nu(z_i,U)$ to simplify notation.

\subsection{Leave-One-Out Method}

One simple way to quantify data importance is to measure one data point's contribution to the rest of the training data:
\begin{align}
    \nu_\text{loo}(z_i) = U(D) - U(D\setminus \{z_i\})
\end{align}
This data importance measure is referred to as the Leave-One-Out (LOO) value. The exact evaluation of the LOO values for $N$ training points requires re-training the model for $N$ times and the associated computational cost is prohibitive for large training datasets and large models. For deep neural networks, Koh \textit{et al.}~\cite{koh2017understanding} proposed to estimate the model performance change due to the removal of each training point via influence functions. However, in order to obtain the influence functions, one will need to evaluate the inverse of the Hessian for the loss function. With $N$ training points and $p$ model parameters, it requires $\OO(N p^2 + p^3)$ operations. Koh \textit{et al.}~\cite{koh2017understanding} approximate the influence function with $\OO(Np)$ complexity, which is still expensive for large networks. 


\subsection{Shapley Value-based Method }

The Shapley value is a classic concept in cooperative game theory to distribute the total gains generated by the coalition of all players. One can think of a supervised learning problem as a cooperative game among training data instances and apply the Shapley value to value the contribution of each training point. 
Given a performance measure $U$, the Shapley value for training data $z_i$ is defined as the average marginal contribution of $z_i$ to all possible subsets of $D$ formed by other training points:
\begin{small}
\begin{align}
\label{eqn:shap_def}
    \nu_\text{shap}(z_i) = \frac{1}{N} \sum_{S\subseteq D\setminus\{z_i\}} \frac{1}{{N-1 \choose |S|}}
\big[U(S\cup \{z_i\})-U(S)\big]
\end{align}
\end{small}
However, calculating the Shapley value can be expensive: evaluating the exact Shapley value involves computing the marginal contribution of each training point to all possible subsets, whose complexity is $\mathcal{O}(2^N)$. Such complexity is clearly impractical for valuating a large number of training points. Even worse, for ML tasks, evaluating the utility function \textit{per se} (e.g., validation accuracy) is computationally expensive as it requires re-training an ML model. 

{\bf MCMC-based Approximation.} Ghorbani \textit{et al.}~\cite{ghorbani2019data} introduced two approaches to approximating the Shapley value based on Monte Carlo approximation. The central idea behind these approaches is to treat the Shapley value of a training point as its expected contribution to a random subset and use sample average to approximate the expectation. By the definition of the Shapley value, the random set has size 0 to $N-1$ with equal probability and is also equally likely to be any subset of a given size (corresponding to the $1/ {{N-1}\choose{|S|}}$ factor). In practice, one can implement an equivalent sampler by drawing a random permutation of the training set. Then, the Shapley value can be estimated by computing the marginal contribution of a point to the points preceding it and averaging the marginal contributions across different permutations. However, these Monte Carlo-based approaches cannot circumvent the need to re-train models and therefore are not viable for large models. In our experiments, we found that the approaches in Ghorbani \textit{et al.}~\cite{ghorbani2019data} can manage data size up to one thousand for simple models such as logistic regression and shallow neural networks, while failing to estimate the Shapley value for larger data sizes and deep nets in a reasonable amount of time. We evaluate runtime in more details in Section~\ref{section:exp}.

{\bf $K$NN Surrogate-based Approach.}
In one recent paper, Jia \textit{et al.}~\cite{jia2019efficient} developed an exact, efficient algorithm to compute the Shapley value for 
$K$NN classifiers. In principle,
we can use a $K$NN classifier to 
act as a surrogate model and use it
instead of the target learning algorithm. 
Given a single validation point $x_\text{val}$ with the label $y_\text{val}$, the simplest, unweighted version of a $K$NN classifier first finds the top-$K$ training points $(x_{\alpha_1},\cdots,x_{\alpha_K})$ that are most similar to $x_\text{val}$ and outputs the probability of $x_\text{val}$ taking the label $y_\text{val}$ as $P[x_\text{val} \rightarrow y_\text{val}]=\frac{1}{K}\sum_{i = 1}^{K} \mathbbm{1}[y_{\alpha_i} = y_\text{val}]$. 
We assume that the confidence of predicting the right label is used as the performance measure, i.e., 
\begin{align}
\begin{small}
\label{eqn:utility_classification_unweighted}
  U(S) =\frac{1}{K} \sum_{k=1}^{\min\{K,|S|\}} \mathbbm{1}[y_{\alpha_k(S)} = y_\text{val}]
\end{small}
\end{align}
where $\alpha_k(S)$ represents the index of the training feature that is the $k$th closest to $x_\text{val}$ among the training examples in $S$. Particularly, $\alpha_k(D)$ is abbreviated to $\alpha_k$. Under this performance measure, the Shapley value can be calculated exactly using the following theorem.

\begin{Thm}[Jia \textit{et al.}~\cite{jia2019efficient}]
\label{thm:knn_shapley}
Consider the model performance measure in (\ref{eqn:utility_classification_unweighted}). Then, the Shapley value of each training point can be calculated recursively as follows:
\begin{small}
\begin{align}
\label{eqn:KNN_unweighted_class_1}
&\nu(z_{\alpha_N})=\frac{\mathbbm{1}[y_{\alpha_{N}} = y_\text{val}]}{N}
\\
\label{eqn:KNN_unweighted_class_2}
& \nu(z_{\alpha_i}) = \nu(z_{\alpha_{i+1}})\!\! +  \frac{\mathbbm{1}[y_{\alpha_i} = y_\text{val}] - \mathbbm{1}[y_{\alpha_{i+1}} = y_\text{val}]}{K}
    \frac{\min\{K, i\}}{i}
\end{align}
\end{small}
\end{Thm}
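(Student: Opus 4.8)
The plan is to work directly from the subset definition (\ref{eqn:shap_def}) of the Shapley value and to exploit the fact that, after sorting the training points by distance to $x_\text{val}$, two consecutive points $z_{\alpha_i}$ and $z_{\alpha_{i+1}}$ have an identical set of strictly-closer points. First I would dispatch the base case (\ref{eqn:KNN_unweighted_class_1}). Because $z_{\alpha_N}$ is the farthest point from $x_\text{val}$, adjoining it to a subset $S$ changes the utility only when fewer than $K$ points of $S$ are closer, which (all of $S$ being closer) means only when $|S| \le K-1$; in that regime the new point always enters the counted neighborhood, so the marginal contribution is exactly $\tfrac{1}{K}\mathbbm{1}[y_{\alpha_N}=y_\text{val}]$, while for $|S|\ge K$ it vanishes. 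Substituting into (\ref{eqn:shap_def}), the $\binom{N-1}{|S|}$ weights cancel against the $\binom{N-1}{s}$ subsets of each size $s$, and the surviving sum over $s=0,\dots,K-1$ contributes a factor of $K$, giving $\nu(z_{\alpha_N})=\mathbbm{1}[y_{\alpha_N}=y_\text{val}]/N$.

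For the recursion (\ref{eqn:KNN_unweighted_class_2}) the key move is to compute the difference $\nu(z_{\alpha_i})-\nu(z_{\alpha_{i+1}})$ rather than each value separately. I would reindex the subsets in (\ref{eqn:shap_def}): every $S\subseteq D\setminus\{z_{\alpha_i}\}$ equals $T$ or $T\cup\{z_{\alpha_{i+1}}\}$ for some $T\subseteq D\setminus\{z_{\alpha_i},z_{\alpha_{i+1}}\}$, and symmetrically for $\nu(z_{\alpha_{i+1}})$. Subtracting, the terms involving $U(T\cup\{z_{\alpha_i},z_{\alpha_{i+1}}\})$ cancel, leaving a single sum over $T$ of $\big[U(T\cup\{z_{\alpha_i}\})-U(T\cup\{z_{\alpha_{i+1}}\})\big]$ weighted by $\binom{N-1}{|T|}^{-1}+\binom{N-1}{|T|+1}^{-1}$. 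The utility difference is then easy to pin down: since $z_{\alpha_i}$ and $z_{\alpha_{i+1}}$ sit at the same rank within their respective sets, writing $m$ for the number of points of $T$ closer to $x_\text{val}$ than $z_{\alpha_i}$, the swap alters the counted top-$K$ neighbors only when $m<K$, and then by exactly $\tfrac{1}{K}\big(\mathbbm{1}[y_{\alpha_i}=y_\text{val}]-\mathbbm{1}[y_{\alpha_{i+1}}=y_\text{val}]\big)$; when $m\ge K$ the two utilities coincide.

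It then remains to evaluate the combinatorial weight, which I expect to be the main obstacle. Splitting $T$ into $a$ of the $i-1$ points closer than $z_{\alpha_i}$ and $b$ of the $N-i-1$ points farther than $z_{\alpha_{i+1}}$ (so $m=a$), the constraint $m<K$ restricts $a\le\min(K-1,i-1)$, and the weight becomes $\sum_{a}\binom{i-1}{a}\sum_{b}\binom{N-i-1}{b}\big[\binom{N-1}{a+b}^{-1}+\binom{N-1}{a+b+1}^{-1}\big]$. I would first collapse the bracket using the identity $\binom{N-1}{s}^{-1}+\binom{N-1}{s+1}^{-1}=\tfrac{N}{N-1}\binom{N-2}{s}^{-1}$, then resolve the inner sum over $b$ via the Beta-integral representation $\binom{N-2}{s}^{-1}=(N-1)\int_0^1 t^{s}(1-t)^{N-2-s}\,dt$: pulling the integral out and applying the binomial theorem turns $\sum_b\binom{N-i-1}{b}(t/(1-t))^b$ into $(1-t)^{-(N-i-1)}$, leaving a single Beta integral that evaluates to $\tfrac{N-1}{i}\binom{i-1}{a}^{-1}$, independently of $a$. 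The $\binom{i-1}{a}$ factors cancel, so each of the $\min(K-1,i-1)+1=\min(K,i)$ admissible values of $a$ contributes equally; after multiplying by the $1/N$ prefactor in (\ref{eqn:shap_def}) the total weight is exactly $\min(K,i)/i$, which combined with the utility difference yields (\ref{eqn:KNN_unweighted_class_2}).
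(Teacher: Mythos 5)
Your proposal is correct. Note that the paper itself gives no proof of Theorem~\ref{thm:knn_shapley}: it imports the result from \citet{jia2019efficient}, so there is no in-paper argument to compare against line by line. Your derivation is nonetheless a sound, self-contained reconstruction of the standard one. The base case is right (for $|S|\ge K$ the farthest point never enters the neighborhood, and the $\binom{N-1}{s}$ weights cancel against the subset counts for $s\le K-1$). Your subset-pairing step, which produces the weight $\binom{N-1}{|T|}^{-1}+\binom{N-1}{|T|+1}^{-1}$, is exactly equivalent --- via the identity you invoke --- to the difference formula the paper does state as Lemma~\ref{lm:shapley_diff} in the appendix, so you have in effect re-derived that lemma inline. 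The case analysis on $m$, the rank of the inserted point, correctly uses that $z_{\alpha_i}$ and $z_{\alpha_{i+1}}$ have the same set of strictly closer points in any $T\subseteq D\setminus\{z_{\alpha_i},z_{\alpha_{i+1}}\}$, and the Beta-integral evaluation checks out: $S_a=(N-1)\int_0^1 t^a(1-t)^{i-1-a}\,dt=\tfrac{N-1}{i}\binom{i-1}{a}^{-1}$, so the $\min\{K,i\}$ admissible values of $a$ each contribute $1/i$, giving the factor $\min\{K,i\}/i$ in (\ref{eqn:KNN_unweighted_class_2}).
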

Theorem~\ref{thm:knn_shapley} can be readily extended to the case of multiple validation points by summing up the Shaplley value with respect to each validation point.
We will call the scores obtained from (\ref{eqn:KNN_unweighted_class_1}) and (\ref{eqn:KNN_unweighted_class_2}) \textit{the $K$NN-Shapley value} hereinafter. For each validation point, computing \textit{the $K$NN-Shapley value} requires only $\OO(N\log N)$ time, which circumvents the exponentially many utility evaluations entailed by the Shapley value definition.  

\textbf{Using Pre-trained Embeddings.} One  problem of using $K$NN as a surrogate model is that 
$K$NN often does not perform well on high-dimensional data. 
As many works have
illustrated the power of pre-trained embeddings on a different, new task~\cite{kornblith2019better,kolesnikov2019revisiting,qiu2020pre,zhai2019large},
we address this problem by using \textit{pre-trained embeddings} as a feature extractor and
then apply $K$NN.
Note that this feature
needs to be trained 
on a \textit{different dataset} for $K$NN surrogate to respect Shapley value semantics.

\section{Theoretical Comparison Between LOO and Shapley Value}
\label{sec:theory}

We now focus on the first question: \textit{What’s the relationships and differences, both theoretically and empirically, between these two lines of approaches?} 
Specifically, we define two 
performance metrics and conduct
theoretical analysis \textit{under
different technical assumptions}.
To our best knowledge,
this is the first theoretical analysis
reasoning about the relative performances of different techniques that measure data importance.

\subsection{Performance Metric 1: Order-Preservation}
Both the LOO-based method and the Shapley-based method only measure the importance of a data point relative to other points in the given dataset. Since it is still uncertain what data will be used in tandem with the point being valued after its importance is measured, \textit{in the 
first performance metric}, we hope that the data importance measures of a point are indicative of the expected performance boost when combining the point with a random set of data points.

In particular, we consider two points that have different scores under a given data importance measure and study whether the expected model performance improvements due to the addition of these two points will have the same order as the importance scores. With the same order, we can confidently select the higher-importance point in favor of another when performing ML tasks. We formalize this desirable property in the following definition.

\begin{Df}
We say a data importance measure $\nu$ is order-preserving at a pair of training points $z_i,z_j$ with different scores if $\big(\nu(z_i,U) - \nu(z_j,U)\big)\times\E\big[U(T\cup \{z_i\}) - U(T\cup \{z_j\})\big] > 0$
where $T$ is an arbitrary random set drawn from some distribution.
\end{Df}

For general model performance measures $U$, it is difficult to analyze the order-preservation of the corresponding data importance measures. However, for $K$NN, we can precisely characterize this property for both the LOO and the Shapley value. The formula for \textit{the $K$NN-Shapley value} is given in Theorem~\ref{thm:knn_shapley} and we present the expression for \textit{the $K$NN-LOO value} in the following lemma.

\begin{Lm}[$K$NN-LOO Value]
\label{lm:knn_loo}
Consider the model performance measure in (\ref{eqn:utility_classification_unweighted}). Then, \textit{the $K$NN-LOO value} of each training point can be calculated by $\nu_\text{loo}(z_{\alpha_i}) = \frac{1}{K}\big(\mathbbm{1}[y_{\alpha_i} = y_\text{val}] - \mathbbm{1}[y_{\alpha_{K+1}} = y_\text{val}] \big)$ if $i\leq K$ and $0$ otherwise.
\end{Lm}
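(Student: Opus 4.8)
The plan is to compute $\nu_\text{loo}(z_{\alpha_i}) = U(D) - U(D\setminus\{z_{\alpha_i}\})$ directly by tracking which training points occupy the top-$K$ nearest-neighbor slots before and after deleting $z_{\alpha_i}$. Throughout I assume $N \geq K+1$, so that a $(K+1)$-th closest point $z_{\alpha_{K+1}}$ exists, and I recall that the indices $\alpha_1,\ldots,\alpha_N$ are labeled in increasing order of distance to $x_\text{val}$; hence the $K$ nearest neighbors of the full set $D$ are exactly $z_{\alpha_1},\ldots,z_{\alpha_K}$. Consequently, by (\ref{eqn:utility_classification_unweighted}), $U(D) = \frac{1}{K}\sum_{k=1}^K \mathbbm{1}[y_{\alpha_k} = y_\text{val}]$.

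The argument splits into two cases according to whether $z_{\alpha_i}$ lies among the $K$ nearest neighbors of $D$. First I would handle the easy case $i > K$: here $z_{\alpha_i}$ is farther from $x_\text{val}$ than all of $z_{\alpha_1},\ldots,z_{\alpha_K}$, so its removal leaves the identity of the top-$K$ neighbors unchanged. Thus $U(D\setminus\{z_{\alpha_i}\}) = U(D)$ and $\nu_\text{loo}(z_{\alpha_i}) = 0$, matching the claim.

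The main case is $i \leq K$. Removing $z_{\alpha_i}$ vacates one slot in the top-$K$ window, and the neighbors shift so that the $K$ nearest neighbors of $D\setminus\{z_{\alpha_i}\}$ become $\{z_{\alpha_1},\ldots,z_{\alpha_K}\}\setminus\{z_{\alpha_i}\}$ together with the previously $(K+1)$-th closest point $z_{\alpha_{K+1}}$, which is promoted into the window. Substituting into (\ref{eqn:utility_classification_unweighted}) gives $U(D\setminus\{z_{\alpha_i}\}) = \frac{1}{K}\big(\sum_{k=1,\, k\neq i}^K \mathbbm{1}[y_{\alpha_k}=y_\text{val}] + \mathbbm{1}[y_{\alpha_{K+1}}=y_\text{val}]\big)$. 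Taking the difference $U(D) - U(D\setminus\{z_{\alpha_i}\})$, every term with $k \neq i$ cancels, leaving exactly $\frac{1}{K}(\mathbbm{1}[y_{\alpha_i}=y_\text{val}] - \mathbbm{1}[y_{\alpha_{K+1}}=y_\text{val}])$.

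The computation itself is routine; the only real care needed is the bookkeeping of which indices fill the top-$K$ window after deletion, together with the boundary assumption $N \geq K+1$ that guarantees a replacement neighbor $z_{\alpha_{K+1}}$ exists. I expect this promotion step to be the single point where a mistake could creep in, so I would state the boundary assumption explicitly at the outset (noting that when $N = K$ the formula degenerates and the $\alpha_{K+1}$ term simply drops out) to make the main case unambiguous.
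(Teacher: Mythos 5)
Your proof is correct: the direct computation of $U(D) - U(D\setminus\{z_{\alpha_i}\})$ by tracking which point is promoted into the top-$K$ window is exactly the argument the lemma rests on (the paper states this lemma without giving a proof in its appendix, so there is nothing to diverge from). Your explicit handling of the boundary case $N \geq K+1$ and the degenerate case $N = K$ is a small but genuine improvement in care over the paper's bare statement.
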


Now, we are ready to state the theorem that exhibits the order-preservation of \textit{the $K$NN-LOO value} and \textit{the $K$NN-Shapley value}.

\begin{restatable}{Thm}{goldbach}
\label{thm:comparison}
For any given $D=\{z_1,\ldots,z_N\}$, where $z_i = (x_i,y_i)$, and any given validation point $z_\text{val}=(x_\text{val},y_\text{val})$, assume that $z_1,\ldots,z_N$ are sorted according to their similarity to $x_\text{val}$. Let $d(\cdot,\cdot)$ be the feature distance metric according to which $D$ is sorted. Suppose that $P_{(X,Y)
\in\D}(d(X,x_\text{val})\geq d(x_i,x_\text{val}))>\delta$ for all $i=1,\ldots,N$ and some $\delta > 0$. Then, $\nu_\text{shap-knn}$ is order-preserving for all pairs of points in $I$; $\nu_\text{LOO-knn}$ is order-preserving only for $(z_i,z_j)$ such that $\max{i,j}\leq K$.
\end{restatable}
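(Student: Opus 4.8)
The plan is to reduce the order-preserving condition to a pure sign comparison and then exploit a feature special to the $K$NN utility: the marginal-contribution \emph{difference} has a sign that does not depend on the set it is evaluated against. Writing $\rho_k := \mathbbm{1}[y_k = y_\text{val}]$ and using $U(T\cup\{z_i\}) - U(T\cup\{z_j\}) = \big(U(T\cup\{z_i\}) - U(T)\big) - \big(U(T\cup\{z_j\}) - U(T)\big)$, linearity of expectation rewrites the quantity in the definition as $\E_T[\Delta(T)]$ with $\Delta(S) := U(S\cup\{z_i\}) - U(S\cup\{z_j\})$. The order-preserving inequality then reads $\sign\big(\nu(z_i)-\nu(z_j)\big) = \sign\big(\E_T[\Delta(T)]\big)$, so the whole statement is about whether a value measure assigns a pair the same sign as $\E_T[\Delta(T)]$.

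The key step is an explicit formula for $\Delta(S)$. Assuming $i<j$ (so $x_i$ is at least as close to $x_\text{val}$ as $x_j$), I would split on how many points of $S$ are closer to $x_\text{val}$ than $x_i$ and than $x_j$: inserting a point changes the top-$K$ neighbours only when fewer than $K$ points of $S$ precede it, and in that case it evicts the current $K$-th neighbour. This gives three cases: $\Delta(S)=\tfrac1K(\rho_i-\rho_j)$ when both points enter the top-$K$; $\Delta(S)=\tfrac1K(\rho_i-\rho_{(K)}(S))$, with $\rho_{(K)}(S)$ the correctness of the $K$-th neighbour of $S$, when only $z_i$ enters; and $\Delta(S)=0$ otherwise. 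The crucial observation, which I would isolate as a lemma, is that every nonzero case is $\ge 0$ when $\rho_i=1$ and $\le 0$ when $\rho_i=0$, so $\Delta(S)$ carries a single sign uniformly over all $S$. Since the Shapley value itself is an average of $\Delta$ over subsets of $D$ (using the standard difference identity $\nu_\text{shap}(z_i)-\nu_\text{shap}(z_j)=\E_S[\Delta(S)]$ for the suitably weighted random $S\subseteq D\setminus\{z_i,z_j\}$), the sign of $\nu_\text{shap}(z_i)-\nu_\text{shap}(z_j)$ and of $\E_T[\Delta(T)]$ are both equal to this common sign, yielding order-preservation of the Shapley value at every pair whose two values differ.

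For the LOO value I would substitute the closed form of Lemma~\ref{lm:knn_loo}. When $\max\{i,j\}\le K$ the common reference term $\rho_{K+1}$ cancels and $\nu_\text{loo}(z_i)-\nu_\text{loo}(z_j)=\tfrac1K(\rho_i-\rho_j)$, whose sign is exactly the uniform sign of $\Delta$ coming from the first case; that case is active on a positive-probability event, so $\E_T[\Delta]$ is nonzero with the matching sign and order-preservation holds. When $\max\{i,j\}>K$ there are two possibilities. If both indices exceed $K$, both LOO values are $0$, so the pair is never an order-preserving pair in the sense of the definition, even though $\E_T[\Delta]\neq0$ whenever $\rho_i\neq\rho_j$. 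If instead $i\le K<j$, the far point still gets value $0$ while the near point's value is governed by $\rho_{K+1}$ rather than $\rho_j$; I would show this decouples the LOO sign from $\sign(\E_T[\Delta])$, since in the regime $\rho_i=\rho_j$ the first case of $\Delta$ is unavailable and $\E_T[\Delta]$, carried entirely by the eviction term, can vanish while the LOO difference does not. Hence the product fails to be positive, confining the order-preserving pairs of LOO to $\max\{i,j\}\le K$.

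The main obstacle I anticipate is the non-degeneracy bookkeeping, which is exactly where the hypothesis $P_{(X,Y)\in\D}(d(X,x_\text{val})\ge d(x_i,x_\text{val}))>\delta$ enters. A uniform sign for $\Delta$ only rules out strictly opposite signs; to get a strict product $>0$ I must exclude $\E_T[\Delta]=0$ while $\nu(z_i)\neq\nu(z_j)$, i.e. guarantee that the events on which $\Delta\neq0$ carry positive $T$-mass. The $\delta$ bound ensures every point has a uniformly positive chance of not being crowded out of the top-$K$ by points of $T$, so the events ``fewer than $K$ points of $T$ closer than $x_j$'' and the intermediate eviction event both have positive probability. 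The delicate part is transferring non-vanishing of the Shapley-side combinatorial average (over subsets of the fixed $D$) to non-vanishing of the population expectation $\E_T[\Delta]$, and conversely pinning down exactly which $\max\{i,j\}>K$ configurations make the LOO product degenerate; I would handle both by exhibiting, for each label pattern, a single positive-probability event that forces the correct strict sign.
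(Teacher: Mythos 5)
Your proposal is sound and reaches the same conclusion, but the way you handle the Shapley side is genuinely different from the paper's argument. The paper determines the sign of $\nu_\text{shap-knn}(z_i)-\nu_\text{shap-knn}(z_{i+l})$ by telescoping the closed-form recursion of Theorem~\ref{thm:knn_shapley} into the explicit sum (\ref{eqn:knn_shap_dissect}) and then running an extremal argument over the intermediate labels $y_{i+1},\ldots,y_{i+l-1}$ to pin the sign to $\mathbbm{1}[y_i=y_\text{val}]-\mathbbm{1}[y_{i+l}=y_\text{val}]$. You instead invoke the subset-average difference identity $\nu_\text{shap}(z_i)-\nu_\text{shap}(z_j)=\E_S[\Delta(S)]$ (which is exactly Lemma~\ref{lm:shapley_diff}, used by the paper only later for Theorem~\ref{thm:dp_value}) together with the observation that $\Delta(S)$ carries a single weak sign, determined by $\rho_i$, uniformly over all $S$; this makes the Shapley difference and the population expectation $\E_T[\Delta(T)]$ averages of the \emph{same} uniformly-signed quantity, so their sign agreement is immediate. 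Your route is more conceptual and unifies the two sides of the order-preserving product, at the cost of having to argue separately that each average is strictly nonzero (for the combinatorial average the set $T=\emptyset$ already does it when $\rho_i\neq\rho_j$; for the population expectation the $\delta$ assumption gives the eviction-free event positive mass), whereas the paper's computation delivers an explicit nonzero lower bound $\frac{\min\{K,i+l-1\}}{(i+l-1)K}$ directly. Your treatment of the expectation (the three-case split on the position of $T_{(K)}$) and of the LOO value (substituting Lemma~\ref{lm:knn_loo} and noting that the reference label $\rho_{K+1}$ decouples the sign once $\max\{i,j\}>K$) coincides with the paper's. One caveat applies to both arguments equally: when $\rho_i=\rho_j$ the uniform-sign (resp.\ extremal) reasoning only yields a weak inequality, and pairs with equal labels but unequal Shapley values are not strictly handled; since the paper's own proof has the same blind spot, this is not a defect of your proposal relative to it.
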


Due to the space limit, we will omit all proofs to the appendix. The assumption that $P_{(X,Y)
\in\D}(d(X,x_\text{val})\geq d(x_i,x_\text{val}))>\delta$ in Theorem~\ref{thm:comparison} intuitively means that it is possible to sample points that are further away from $x_\text{val}$ than the points in $D$. This assumption can easily hold for reasonable data distributions in continuous space.

Theorem~\ref{thm:comparison} indicates that \textit{the $K$NN-Shapley value} has more predictive power than \textit{the $K$NN-LOO value}---\textit{the $K$NN-Shapley value} can predict the relative utility of any two points in $D$, while \textit{the $K$NN-LOO value} is only able to correctly predict the relative utility of the $K$-nearest neighbors of $x_\text{val}$. In Theorem~\ref{thm:comparison}, the relative data utility of two points is measured in terms of the model performance difference when using them in combination with a random dataset. 

Theorem~\ref{thm:comparison} can also be generalized to the setting of multiple validation points using the additivity property. Specifically, for any two training points, \textit{the $K$NN-Shapley value} with respect to multiple validation points is order-preserving when the order remains the same on each validation point, while \textit{the $K$NN-LOO value} with respect to multiple validation points is order-preserving when the two points are within the $K$-nearest neighbors of all validation points and the order remains the same on each validation point. We can see that similar to the single-validation-point setting, the condition for \textit{the $K$NN-LOO value} with respect to multiple validation points to be order-preserving is more stringent than that for \textit{the $K$NN-Shapley value}. 


\subsection{Performance Metric 2: Value Distinguishness}

In the second performance metric, we 
are interested in conditions under which
LOO-based method and Shapley-based method
cannot distinguish between different
data points, independently of their 
importance. The technical tool that 
we use as a demonstration is to consider the setting in which
the classifier is trained 
in a differentially private (DP) manner.


\begin{Df}[Differential privacy]
$\mathcal{A}:\mathcal{D}^N\rightarrow \mathcal{H}$ is $(\epsilon,\delta)$-differentially private if for all $R\subseteq \mathcal{H}$ and for all $D,D'\in \mathcal{D}^N$ such that $D$ and $D'$ differ only in one data instance: $P[\mathcal{A}(D)\in R]\leq e^\epsilon P[\mathcal{A}(D')\in R] + \delta$.
\end{Df}

By definition, differentially private learning algorithms hide the influence of one training point on the model performance. Thus, it may be more difficult to differentiate ``good'' data from ``bad'' ones for differentially private models. We will show that the Shapley value could have more discriminative power than the LOO value when the learning algorithms satisfy DP. 




The following theorem states that for differentially private algorithms, the values of training data are gradually indistinguishable from each other as the training size grows larger using both the LOO and the Shapley value measures; nonetheless, the value differences vanish faster for the LOO value than the Shapley value.

\begin{restatable}{Thm}{differentialprivacy}
\label{thm:dp_value}
For a learning algorithm $\A(\cdot)$ that achieves $(\epsilon(N),\delta(N))$-DP when training on $N$ data points, let the performance measure be $U(S) = -\frac{1}{M}\sum_{i=1}^M \E_{h\sim \A(S)}l(h,z_{\text{val},i})$ for $S\subseteq D$. 
Let $\epsilon'(N) = e^{c\epsilon(N)} - 1 + ce^{c\epsilon(N)}\delta(N)$. It holds that
\begin{small}
\begin{align*}
    &\max_{z_i\in D} \nu_\text{loo}(z_i) \leq \epsilon'(N-1)
    &\max_{z_i\in D} \nu_\text{shap}(z_i) \leq \frac{1}{N-1}\sum_{i=1}^{N-1}\epsilon'(i).
\end{align*}
\end{small}
\end{restatable}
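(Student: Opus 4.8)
I would reduce both inequalities to a single estimate: a bound on the marginal contribution of one point, valid for every coalition $S \subseteq D\setminus\{z_i\}$,
\begin{equation}
\big| U(S\cup\{z_i\}) - U(S) \big| \le \epsilon'(|S|). \tag{$\star$}
\end{equation}
Given $(\star)$, the LOO bound is immediate, since $\nu_\text{loo}(z_i) = U(D) - U(D\setminus\{z_i\})$ is exactly the marginal contribution at $S = D\setminus\{z_i\}$, a coalition of size $N-1$; taking a maximum over $i$ gives $\max_i \nu_\text{loo}(z_i)\le\epsilon'(N-1)$. The Shapley bound is then obtained by carrying $(\star)$ through the weighted average that defines $\nu_\text{shap}$.

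\textbf{Establishing $(\star)$.} This is where differential privacy enters and is the core of the proof. Writing $U(S) = -\frac1M\sum_{j=1}^M \E_{h\sim\A(S)} l(h,z_{\text{val},j})$, it suffices to bound each validation term, because an average of quantities each at most $\epsilon'(|S|)$ is again at most $\epsilon'(|S|)$. For a fixed validation point and a loss valued in $[0,1]$ (a boundedness hypothesis I would make explicit), the ``layer-cake'' representation converts the DP inequality on events into one on expectations: for neighboring datasets, $\E_{h\sim\A(D)} l = \int_0^1 P_{h\sim\A(D)}[\,l>t\,]\,dt \le e^{\epsilon} \E_{h\sim\A(D')} l + \delta$, using $P[\A(D)\in R]\le e^{\epsilon}P[\A(D')\in R]+\delta$ on each super-level set. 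Because $S$ and $S\cup\{z_i\}$ differ by an \emph{addition} rather than by a single replacement, I would invoke group privacy to pass from the replacement guarantee of the definition to this add/remove comparison; this is precisely the step that produces the constant $c$, the amplified factor $e^{c\epsilon}$, and an accumulated noise term that I would bound by $\tfrac{e^{c\epsilon}-1}{e^{\epsilon}-1}\delta \le c\,e^{c\epsilon}\delta$. Combining, $\E_{\A(S\cup\{z_i\})} l - \E_{\A(S)} l \le (e^{c\epsilon}-1)\,\E_{\A(S)} l + c\,e^{c\epsilon}\delta$, and since $\E_{\A(S)} l \le 1$ the right-hand side is at most $e^{c\epsilon}-1+c\,e^{c\epsilon}\delta = \epsilon'$, with the DP parameters read off at the training size attached to the coalition; the reverse comparison gives the matching lower bound, establishing $(\star)$.

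\textbf{From $(\star)$ to the Shapley bound.} Applying the triangle inequality to the definition gives $\max_i\nu_\text{shap}(z_i) \le \frac1N\sum_{S\subseteq D\setminus\{z_i\}} \binom{N-1}{|S|}^{-1}\epsilon'(|S|)$. Grouping coalitions by their common size $s$ and using that there are $\binom{N-1}{s}$ of them, the binomial weights cancel and the double sum collapses to $\frac1N\sum_{s}\epsilon'(s)$, one term per size. To reach the stated closed form $\frac1{N-1}\sum_{i=1}^{N-1}\epsilon'(i)$ I would then use monotonicity of $\epsilon'$ in the sample size—differential privacy strengthens as more data is added, so $\epsilon(N),\delta(N)$, and hence $\epsilon'(N)$, are non-increasing—to realign the index range and reconcile the $\tfrac1N$ versus $\tfrac1{N-1}$ normalization, a boundary term of extreme size being dominated by the average of the remaining terms. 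The same monotonicity drives the theorem's narrative comparison: the LOO bound is a single term $\epsilon'(N-1)$, the smallest of the $\epsilon'(i)$ and hence no larger than their Shapley average, so the LOO values become indistinguishable faster.

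\textbf{Anticipated obstacle.} The delicate part lies entirely in $(\star)$: making the add/remove-versus-replacement reduction rigorous, tracking the $\delta$ term through group privacy so that the exact shape $e^{c\epsilon}-1+c\,e^{c\epsilon}\delta$ emerges, and fixing the convention for which training size governs a given marginal contribution. Once $(\star)$ is secured with a monotone $\epsilon'$, the LOO bound is a one-line substitution and the Shapley bound is the standard ``group-by-coalition-size'' collapse of the Shapley weights.
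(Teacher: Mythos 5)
Your layer-cake step (converting the DP inequality on super-level sets into a bound on expected losses, yielding the constant $e^{c\epsilon}-1+ce^{c\epsilon}\delta$) matches the paper's proof exactly. But your overall reduction has a genuine gap, in two places, both traceable to the decision to bound the \emph{marginal contribution} $U(S\cup\{z_i\})-U(S)$ rather than a \emph{replacement difference}. First, the DP definition used here compares two datasets $D,D'\in\mathcal{D}^N$ of the \emph{same} size that differ in one instance; your estimate $(\star)$ compares $\A$ trained on $|S|$ points with $\A$ trained on $|S|+1$ points, and group privacy over replacements does not bridge that: a sequence of $c$ replacements never changes the dataset size, so no choice of $c$ turns the replacement guarantee into an add/remove guarantee. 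You would need either an add/remove DP hypothesis or an explicit padding-with-a-dummy-symbol construction, neither of which you set up. The paper sidesteps this entirely: it never bounds a marginal contribution, but instead invokes the identity $\nu_\text{shap}(z_i)-\nu_\text{shap}(z_j)=\frac{1}{N-1}\sum_{T\subseteq D\setminus\{z_i,z_j\}}\binom{N-2}{|T|}^{-1}\bigl(U(T\cup\{z_i\})-U(T\cup\{z_j\})\bigr)$ (and the analogous one-line identity for LOO, $U(D\setminus\{z_j\})-U(D\setminus\{z_i\})$), so that every utility comparison is between two sets of identical size $|T|+1$ differing in one replacement --- precisely what the stated DP definition controls. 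The theorem is then really a bound on value \emph{differences} (``indistinguishable from dummy points''), not on raw values.

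Second, even granting $(\star)$, your group-by-coalition-size collapse gives $\frac{1}{N}\sum_{s=0}^{N-1}\epsilon'(s)$, and the realignment you propose goes the wrong way: since $\epsilon'$ is non-increasing in the sample size, the extra boundary term $\epsilon'(0)$ is the \emph{largest} term, so
\begin{align}
\frac{1}{N}\sum_{s=0}^{N-1}\epsilon'(s) \;=\; \frac{1}{N}\epsilon'(0)+\frac{N-1}{N}\cdot\frac{1}{N-1}\sum_{s=1}^{N-1}\epsilon'(s) \;\geq\; \frac{1}{N-1}\sum_{s=1}^{N-1}\epsilon'(s),
\end{align}
with the inequality pointing opposite to what you need (it would only reverse if you adopted the convention $\epsilon'(|S|+1)$ in $(\star)$, so that the stray boundary term is the smallest one, $\epsilon'(N)$). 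The paper's route avoids this bookkeeping altogether: in the difference formula the coalitions $T$ range over subsets of $D\setminus\{z_i,z_j\}$, the relevant size is $|T|+1\in\{1,\dots,N-1\}$, and the weights collapse exactly to $\frac{1}{N-1}\sum_{k=1}^{N-1}\epsilon'(k)$ with no monotonicity argument required.
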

For typical differentially private learning algorithms, such as adding random noise to stochastic gradient descent, the privacy guarantees will be weaker if we reduce the size of training set (e.g., see Theorem 1 in Abadi \textit{et al.}~\cite{abadi2016deep}). In other words, $\epsilon(n)$ and $\delta(n)$ are monotonically decreasing functions of $n$, and so is $\epsilon'(n)$. Therefore, it holds that $\epsilon'(N)< \frac{1}{N}\sum_{i=1}^N \epsilon'(i)$. The implications of Theorem~\ref{thm:dp_value} are three-fold. Firstly, the fact that the maximum score of all training points is directly upper bounded by $\epsilon'$ signifies that stronger privacy guarantees will naturally increase the difficulty to distinguish the importance of different points. Secondly, the monotonic dependency of $\epsilon'$ on $N$ indicates that both the LOO and the Shapley value converge to zero when the training size is very large. Thirdly, by comparing the upper bound for the LOO and the Shapley value, we see that the convergence rate of the Shapley value is slower and thus it has a better chance to differentiate ``good'' data from the ``bad'' ones compared with the LOO value.

Our results are extendable to general \textit{stable} learning algorithms, which are insensitive to the removal of an arbitrary point in the training dataset~\cite{bousquet2002stability}. Stable learning algorithms are appealing as they enjoy provable generalization error bounds. Indeed, differentially private algorithms are subsumed by the class of stable algorithms~\cite{Wang2015-pe}. 
We leave the details to the appendix.

\section{Empirical Studies}
\label{section:exp}


Here we conduct a thorough empirical study on a range of real-world ML applications with different datasets to investigate the performance comparison between (a) leave-one-out--based method and Shapley-based method, and (b) exact Shapley-based method and Shapley over $K$NN surrogates.
We first compare the runtime for different data importance quantification methods, followed by the data importance predictive power comparison, which is demonstrated on applications including mislabeled data detection, watermark removal, data summarization, active data acquisition, and domain adaptation.
Due to the space limit, we leave the detailed experimental settings to appendix.



\subsection{Data Importance Quantification Approaches}
\label{sec:approaches}

Here we we mainly compare the up-to-date data importance quantification approaches, including the exact Shapley-based method, leave-one-out method, as well as the ones using $K$NN as surrogates for both.

\textbf{Truncated Monte Carlo Shapley (TMC-Shapley).} This is a Monte Carlo-based approximation algorithm proposed in Ghorbani \textit{et al.}~\cite{ghorbani2019data}. Monte Carlo-based methods
regard the Shapley value as the expectation of a training instance's marginal contribution to a random set and then use the sample mean to approximate it. 
%

\textbf{Gradient Shapley (G-Shapley).} This is another Monte Carlo-based method proposed in Ghorbani \textit{et al.}~\cite{ghorbani2019data} with a different heuristic to accelerate the algorithm. G-Shapley approximates the model performance change due to the addition of one training point by taking a gradient descent step at that point and calculating the performance difference. This method is applicable only to the models trained with gradient methods; hence, the method will be included as a baseline in our experimental results when the underlying models are trained using gradient methods.

\textbf{Leave-One-Out (LOO).} We use LOO to refer to the algorithm that calculates the exact model performance due to the removal of a training point. Evaluating the LOO error requires to re-train the model on the reduced dataset for every training point, thus also impractical for large models.

\textbf{$K$NN-LOO.} Leave-one-out is efficient for $K$NN according to Theorem~\ref{lm:knn_loo}. To use the $K$NN-LOO for valuing data, we first use the pre-trained models offered in PyTorch~\cite{paszke2017pytorch} to extract features for $K$NN and compute the $K$NN-LOO value over the extracted features.

\textbf{$K$NN-Shapley.} We use $K$NN-Shapley
to refer to the following algorithm:
similar to in $K$NN-LOO, we first use the pretrained models in PyTorch to extract features.
We then directly 
apply Theorem~\ref{thm:knn_shapley} to 
compute the Shapley value over pre-trained
feature transformations. 
When pre-trained feature
transformations are not available, we directly compute \textit{the $K$NN-Shapley value} on the raw data as a surrogate for the true Shapley value. 
The complexity of the above algorithm is $\OO(Nd + N\log N)$ where $d$ is the dimension of  feature representation. As opposed to Monte Carlo-based methods (e.g., \cite{ghorbani2019data,jia2019towards}), the proposed algorithm does not require retraining models. It is well suited for approximating scores for large models. 
 
\textbf{Random.} The random baseline does not differentiate importance between different data points and selects data randomly from training set to perform a given task.

\subsection{Runtime Comparison}

\begin{figure}[t!]
\begin{center}
\centering
\includegraphics[trim=0 0 0 2.2cm,clip,width=0.7\columnwidth]{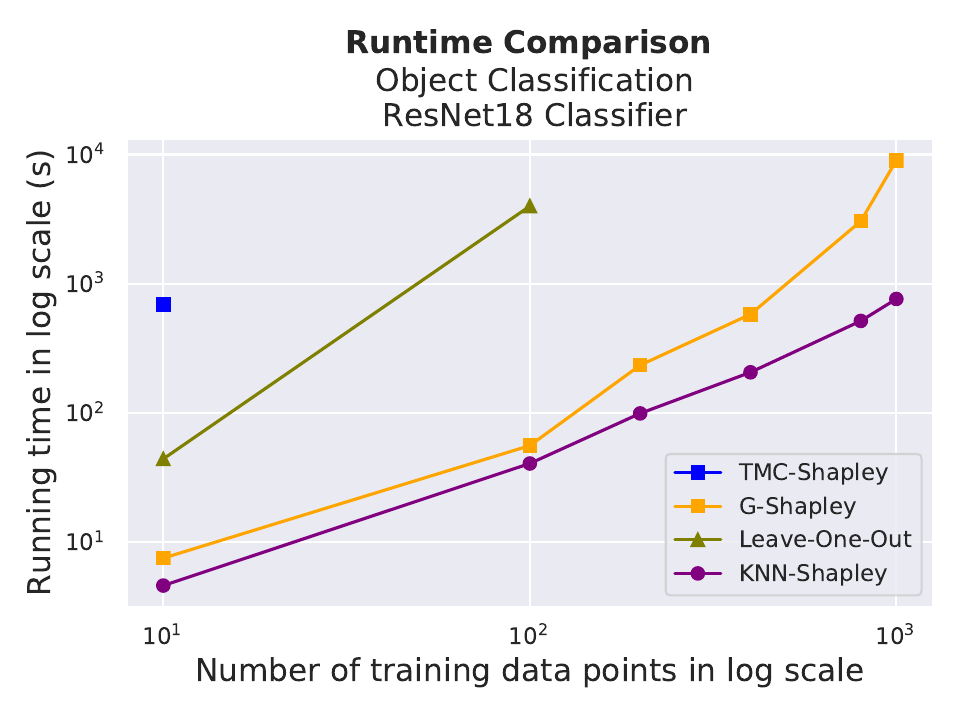}
\caption{\small runtime comparison.}\label{fig:runtime}
\end{center}
\end{figure}

First, we compare the runtime between the $K$NN-Shapley approach and other baselines. Fig.~\ref{fig:runtime} corresponds to ResNet-18~\cite{he2016deep} on CIFAR-10~\cite{cifar10}, implemented on a machine with 1.80 GHz and 32 GB memory.
We can see that $K$NN-Shapley (using pre-trained MobileNet~\cite{howard2017mobilenets} embedding) outperforms other approaches by several orders of magnitude for large training data size and large model size. 

\subsection{Comparisons on Applications}
\label{subsec:comp}

\begin{table}[]
    \centering
    \scriptsize
    \caption{\small a summary of experiments in Appendix~\ref{appendix:sec_exp}.}
    \label{tab:appendix-exp}
\begin{tabular}{L{4cm}L{3.5cm}}
\toprule
{\bf Task (Section)} &  {\bf Datasets} \\
\midrule
1. Noisy labels Detection (\ref{appendix:sec_detect}) & Spam~\cite{tiago2013spam}, Flower\footnoteref{note:flower} \\
2. Pattern-based watermark removal (\ref{appendix:sec_removal}) & Fashion-MNIST~\cite{xiao2017fashion}, MNIST~\cite{lecun2010mnist}, PubFig-83~\cite{kumar2009pubfig} \\
3. Instance-based watermark removal (\ref{appendix:sec_detect}) & CIFAR-10~\cite{cifar10}, SVHN~\cite{Netzer2011ReadingDI} \\
4. Data summarization (\ref{appendix:sec_ds}) & Tiny ImageNet~\cite{Le2015TinyIV} \\
5. Data acquisition (\ref{appendix:sec_da}) & Tiny ImageNet~\cite{Le2015TinyIV} \\
6. Domain adaptation (\ref{appendix:sec_adaptation}) & MNIST~\cite{lecun2010mnist} $\rightarrow$ SVHN~\cite{Netzer2011ReadingDI} \\
\bottomrule
\end{tabular}
\end{table}


We study the efficacy of data importance estimated by different approaches on a range tasks, including noisy label detection, watermark removal, data summarization, active data acquisition, and domain adaptation. While most of the applications are used in a recent work~\cite{ghorbani2019data},
the watermark removal including both pattern-based and instance-based watermark removal evaluations are proposed by us here.
We consistently use the 
MobileNet embedding
pretrained on ImageNet
on image inputs.
For the spam
dataset~\cite{tiago2013spam} and the 
flower dataset\footnote{\label{note:flower}\footnotesize\url{https://www.tensorflow.org/tutorials/load\_data/images}},
we do not apply embedding
as the provided features
are not of image form.
We compare
different embeddings in Appendix~\ref{appendix:sec_embedding}. 

%

\begin{figure}[H]
\begin{center}
\begin{small}
\begin{subfigure}{0.48\linewidth}
\centering
\includegraphics[width=\columnwidth]{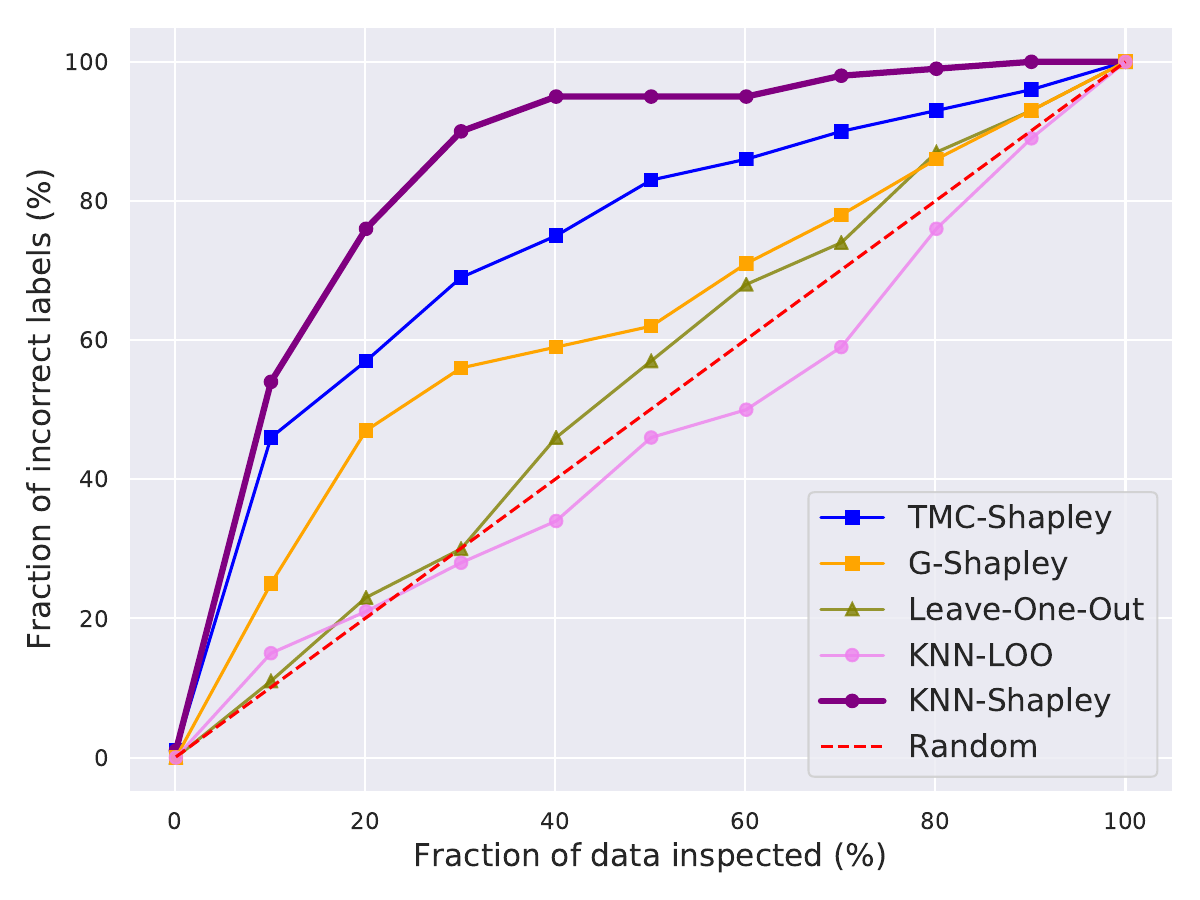}
\caption{\small Noisy labels detection}\label{fig:exp-1}
\end{subfigure}
\begin{subfigure}{0.48\linewidth}
\centering
\includegraphics[width=\columnwidth]{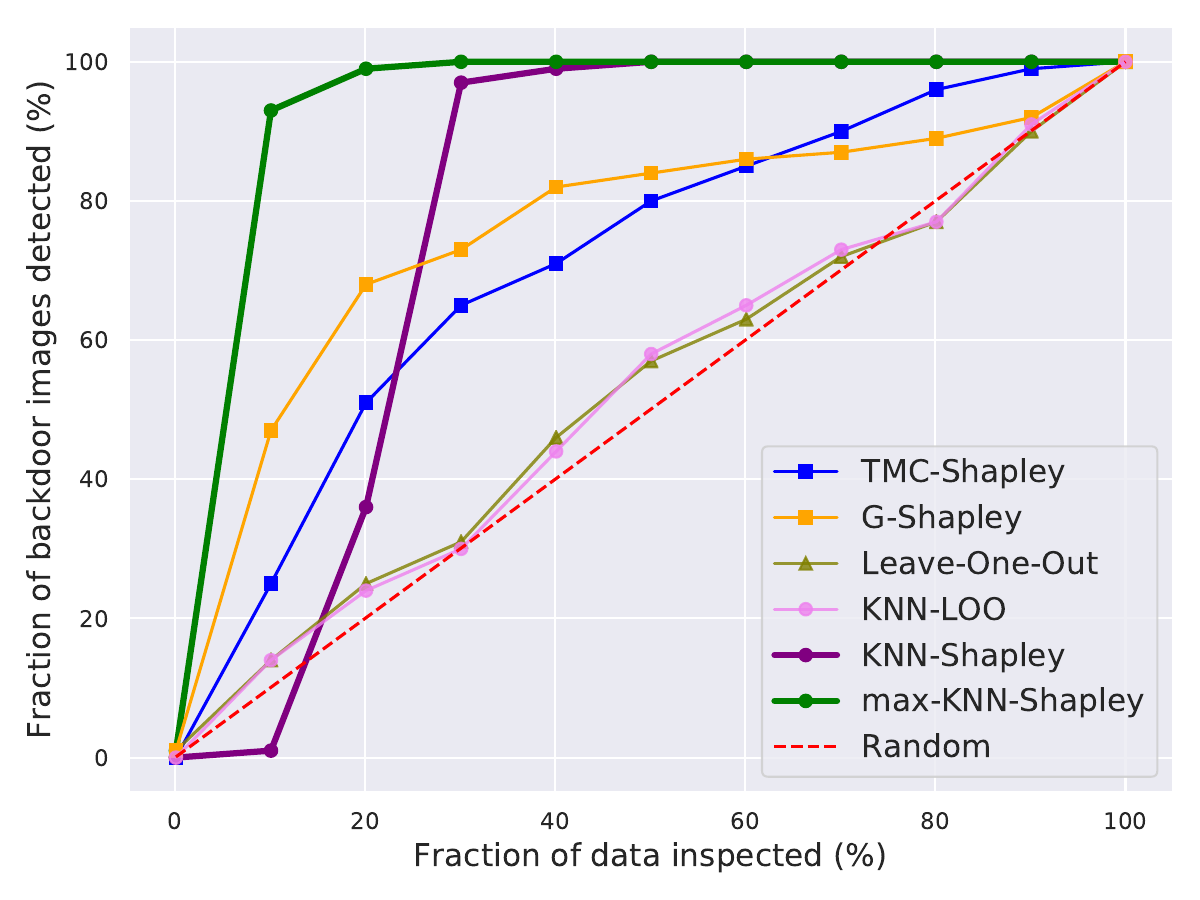}
\caption{\small Watermark removal}\label{fig:exp-2}
\end{subfigure}9
\begin{subfigure}{0.48\linewidth}
\centering
\includegraphics[width=\columnwidth]{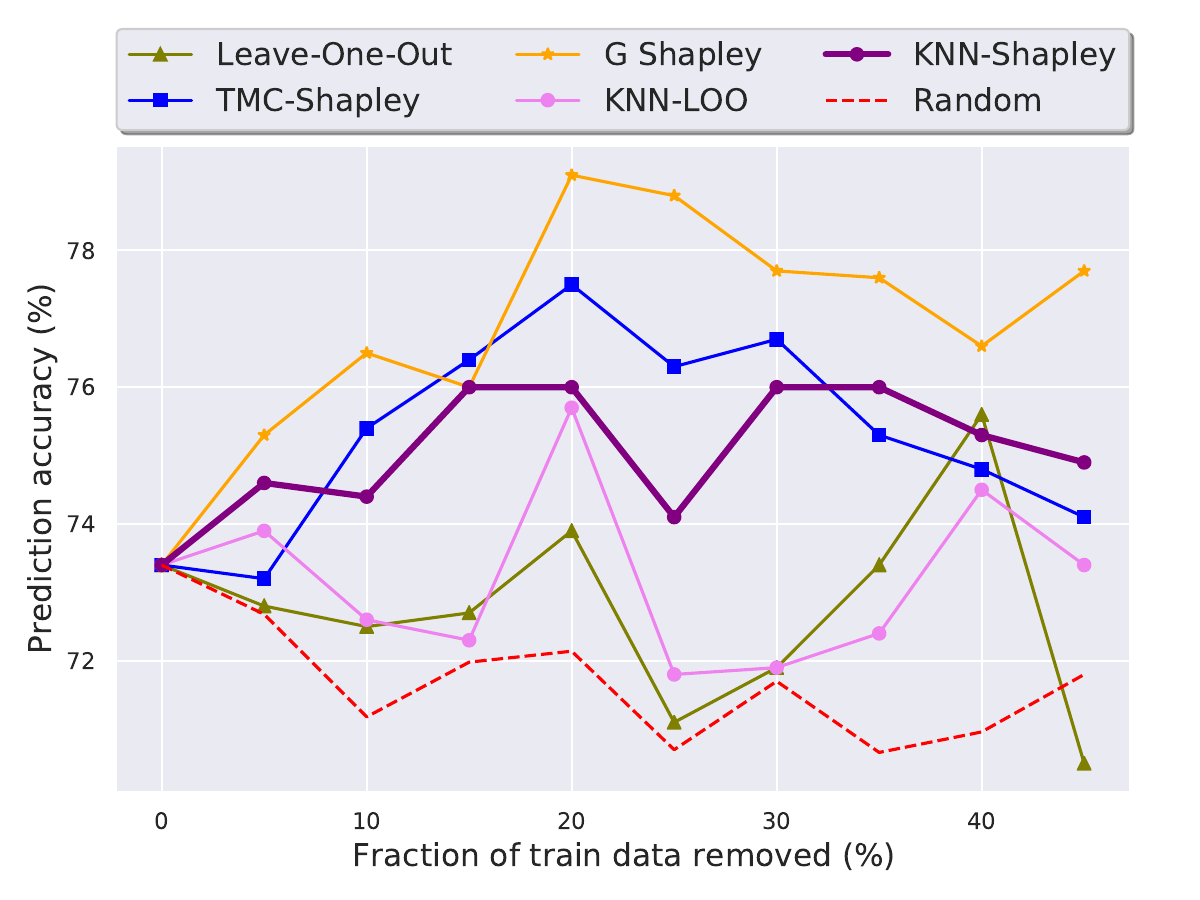}
\caption{\small Data summarization}\label{fig:exp-3}
\end{subfigure}
\begin{subfigure}{0.48\linewidth}
\centering
\includegraphics[width=\columnwidth]{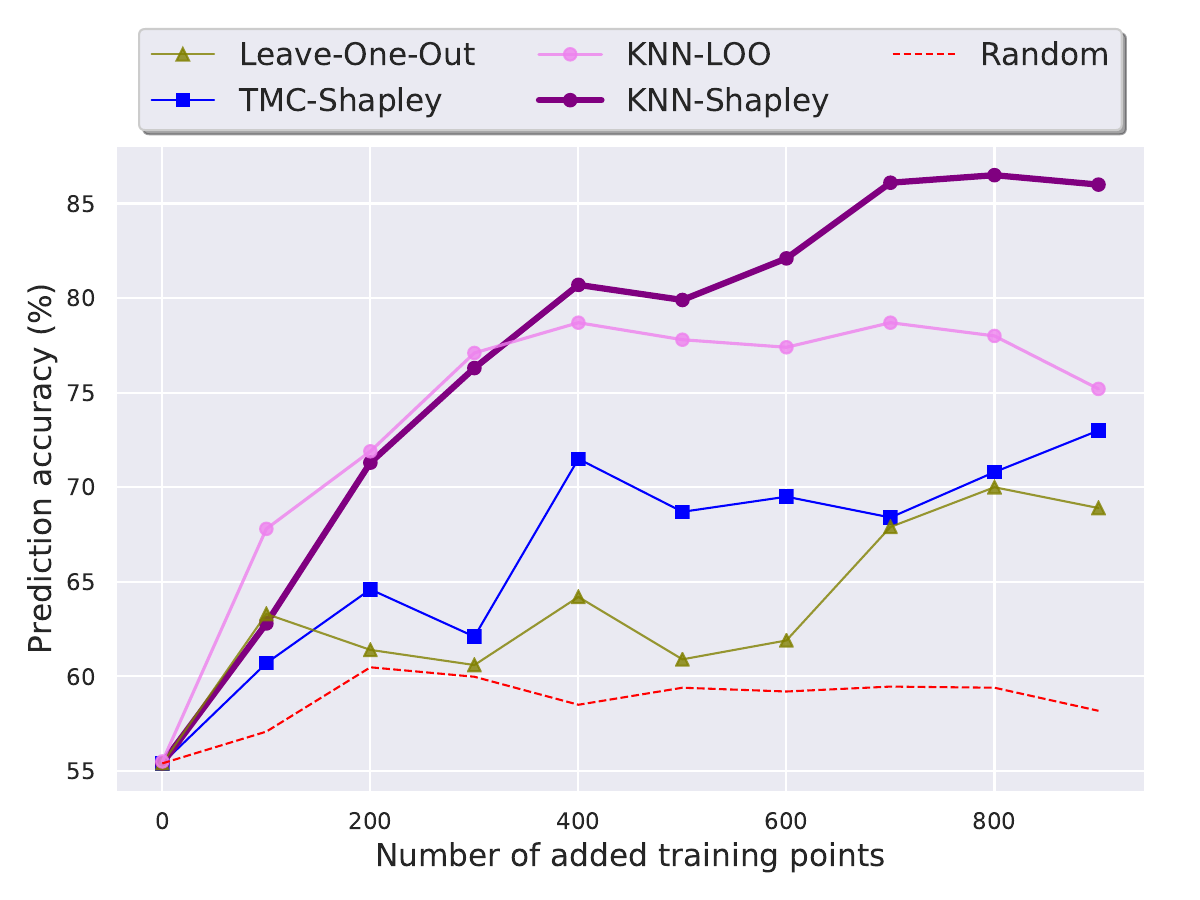}
\caption{\small Data acquisition}\label{fig:exp-4}
\end{subfigure}
\end{small}
\end{center}
\caption{\small The experiment result of (a) noisy label detection on fashion-MNIST dataset; (b) instance-based watermark removal on MNIST dataset; (c) data summarization on UCI Adult Census dataset~\cite{ron1996uci}; (d) data acquisition on MNIST dataset with injected noise.  In (a)-(b) the ``random'' line shows the results of random guess; while in (c)-(d), the ``random'' line corresponds to the empirical results of the random baseline introduced in Section~\ref{sec:approaches}.}\label{fig:exp_overall}
\end{figure}


\paragraph{Summary of Experiments in Appendix.} In this paper, we evaluate different
methods on 6 applications, each of which on 2 to 3 datasets following the practice of previous work.
Due to the space limitation, we describe only \textit{one} representative dataset in the main body and leave the experiment details and results on other datasets to Appendix~\ref{appendix:sec_exp}. Table~\ref{tab:appendix-exp} is a summary of the experiments in the appendix.



\begin{figure*}
\begin{subfigure}{0.245\textwidth}
\centering
\includegraphics[width=\columnwidth]{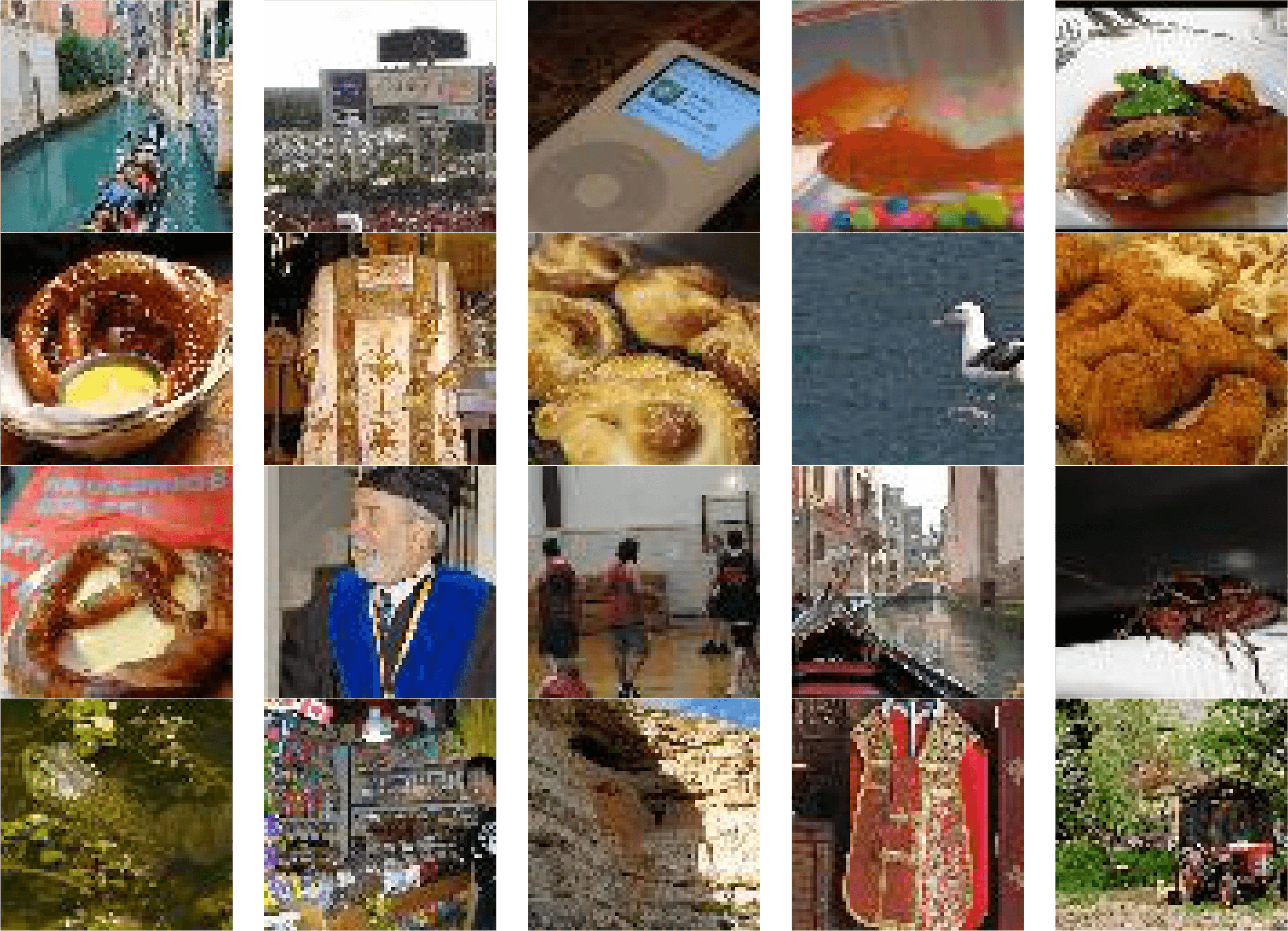}
\caption{}\label{fig:ds-mobilenet}
\end{subfigure}
\begin{subfigure}{0.245\textwidth}
\centering
\includegraphics[width=\columnwidth]{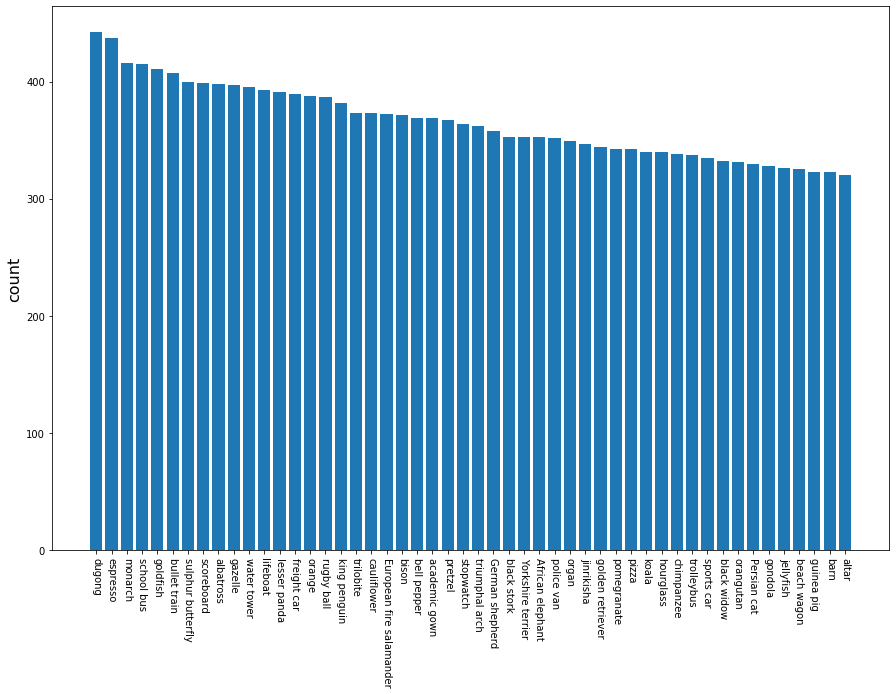}
\caption{}\label{fig:ds-bar-mobilenet}
\end{subfigure}
\begin{subfigure}{0.245\textwidth}
\centering
\includegraphics[width=\columnwidth]{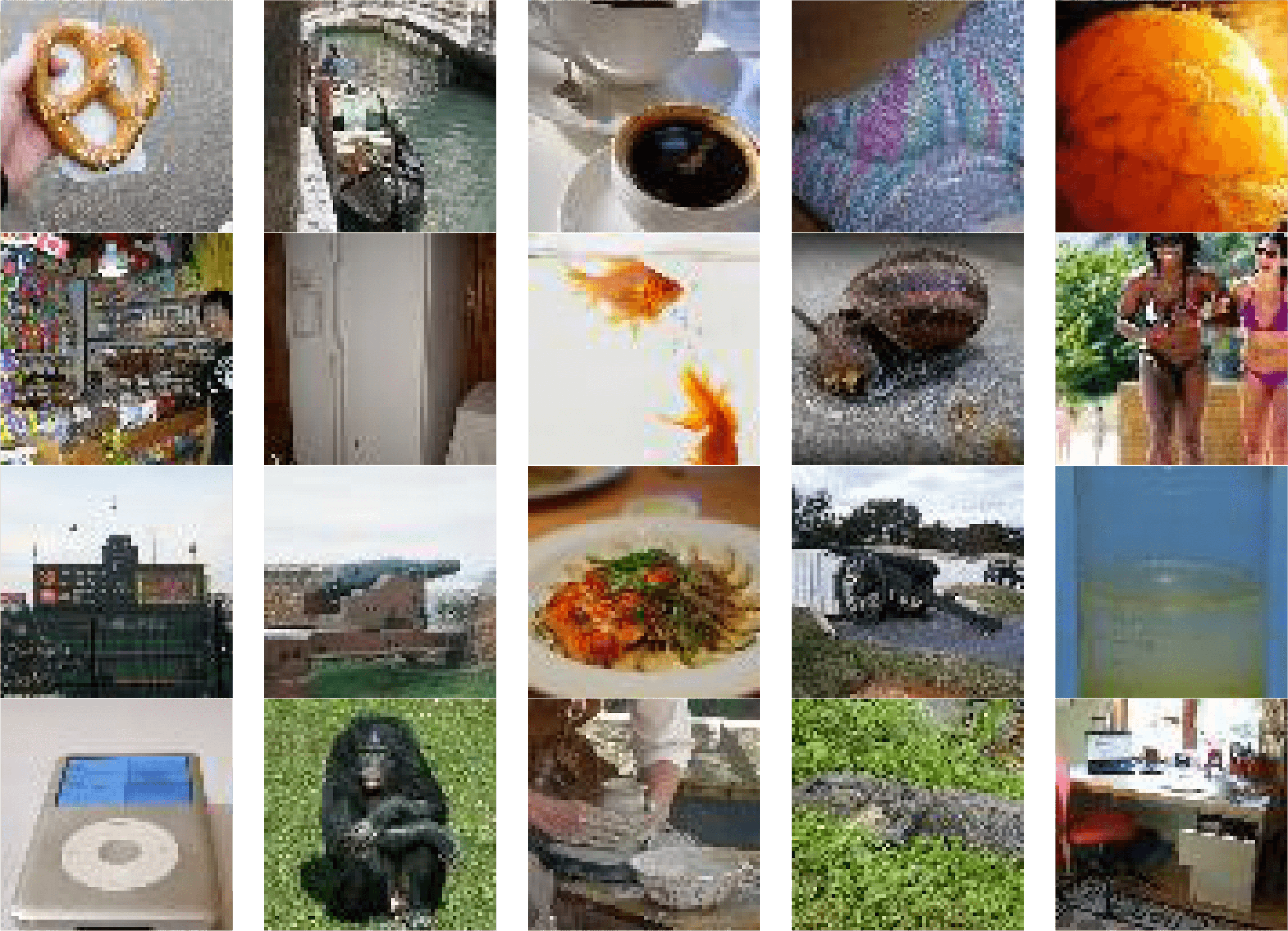}
\caption{}\label{fig:ds-vgg}
\end{subfigure}
\hfill
\begin{subfigure}{0.245\textwidth}
\centering
\includegraphics[width=\columnwidth]{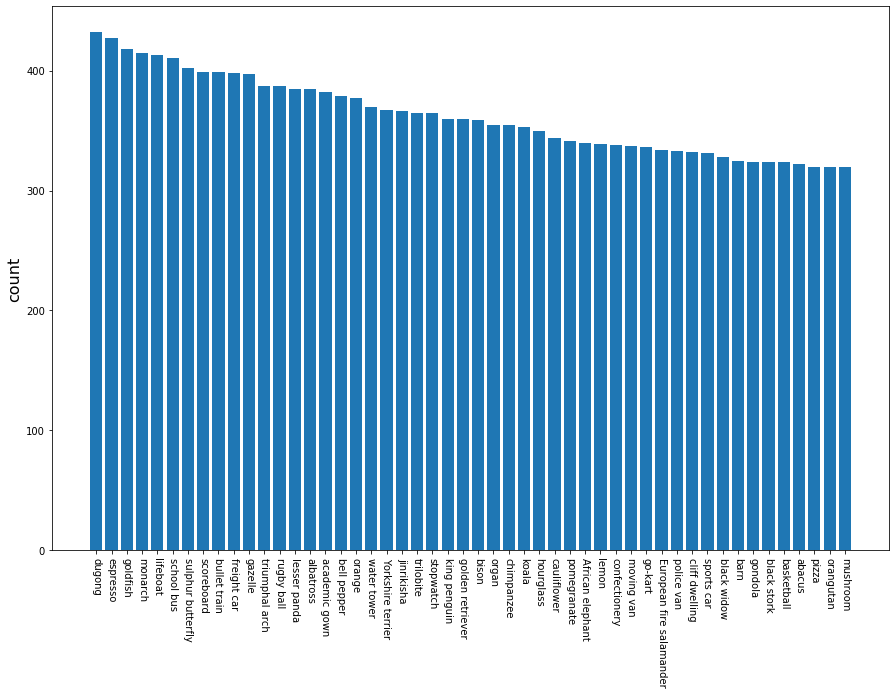}
\caption{}\label{fig:ds-bar-vgg}
\end{subfigure}
\caption{\small (a) (c) Top 20 selected images with the highest Shapley values for Tiny ImageNet, using MobileNet and VGG11 embeddings.
(b) (d) Histogram of images in top 50 classes after summarization (sorted by  decreasing order).
It is clear that within the top 50 classes, there are many overlapped classes between different embeddings. 
}\label{fig:ds-viz}
\end{figure*}

\paragraph{Noisy Labels Detection.}
\phantomsection
\label{sec:detect}
Labels in the real world are often noisy due to automatic labeling, non-expert labeling, or label corruption by data poisoning adversaries. We show that the notion of data importance can help prioritize the verification process, allowing experts to review only the examples that are most likely to be contaminated. \textit{The key idea is to rank the data points according to their data importance and prioritize the points with the lowest importance scores. }
Following Ghorbani \textit{et al.}~\cite{ghorbani2019data}, we perform experiments in three settings
and present the result of a three-layer convolutional network trained on the fashion-MNIST dataset here in the main body. The noise flipping ratio is 10\% for this dataset.
The performance of different data importance measures is illustrated in Fig.~\ref{fig:exp-1}. We examine the label of the training instances that have the lowest scores, and plot the change of the fraction of detected mislabeled data (in percentage) with the fraction of the checked training data (in percentage). We can see that \textit{the $K$NN-Shapley value} outperforms all other methods. Also, the Shapley value--based measures, including TMC-Shapley, G-Shapley, and our KNN-Shapley, are more effective than the LOO-based measures.



\paragraph{Watermark Removal.}
\phantomsection
\label{sec:watermark}



One prevalent way to claim the ownership of a trained deep net is to embed watermarks into the model.
There are two classes of watermarking techniques, namely,
pattern-based techniques and instance-based techniques. The watermark examples are displayed in Fig.~\ref{fig:watermarks}.
Here, we present the experiment results for instance-based techniques. The details on how they the watermarks are generated and how they work, as well as the experiment results for the pattern-based techniques are left to Appendix~\ref{appendix:sec_removal}.

\begin{figure}[htbp]
\begin{center}
\begin{subfigure}{0.155\textwidth}
\centering
\includegraphics[width=\columnwidth]{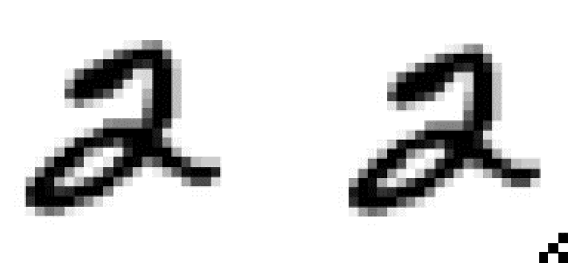}
\caption{}\label{fig:watermark1}
\end{subfigure}
\begin{subfigure}{0.155\textwidth}
\centering
\includegraphics[width=\columnwidth]{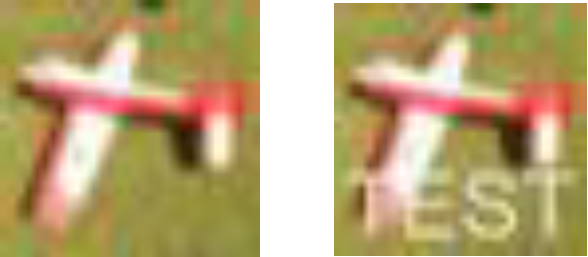}
\caption{}\label{fig:watermark2}
\end{subfigure}
\begin{subfigure}{0.116\textwidth}
\centering
\includegraphics[width=\columnwidth]{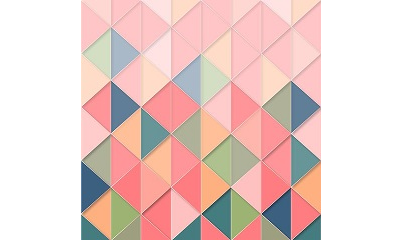}
\caption{}\label{fig:watermark3}
\end{subfigure}
\end{center}
\caption{\small Examples of watermarks generated by (a)-(b) pattern-based and (c) instance-based techniques.  }\label{fig:watermarks}
\end{figure}

%
In this application, we demonstrate that it is always possible for the model trainer to remove the watermarks based on data importance. The idea is that the watermarks should have low data importance by nature, since they contribute little to predict the normal validation data. Note that this experiment constitutes a new type of attack, which might be of independent interest itself. 


We consider the setting of one logistic regression model trained on $10000$ images from MNIST for instance-based watermark removal. The watermark ratio is $10\%$.
For this experiment, we found that both watermarks and benign instances tend to have low scores on some validation instances; therefore, they are not quite separable in terms of the score averaged over the whole validation set. We instead propose to compute the max score across the validation set for each training point, which we name as \textit{max-$K$NN-Shapley}, and remove the instances with lowest \textit{max-$K$NN-Shapley} values. The intuition is that out-of-distribution samples are inessential to the prediction of normal validation instances and thus the maximum of their Shapley values w.r.t. different validation instances should be low. 
In plotting Fig.~\ref{fig:exp-2}, we examine the label of the training instances that have the lowest scores and plot the change of the fraction of the detected watermarks (in percentage) with the fraction of the checked training data (in percentage). The figure reveals that our \textit{max-$K$NN-Shapley} is more effective in detecting instance-based watermarks than all other baselines. 


\paragraph{Data Summarization.}
\phantomsection
\label{sec:ds}
Data summarization aims to select a small representative subset from a massive dataset, which can retain a comparable utility to that of the whole dataset. This is a natural application of data importance, since we can directly reduce the dataset size by eliminating data of low importance. 


We use a single hidden layer neural network trained on UCI Adult Census dataset. 
In Fig.~\ref{fig:exp-3}, we plot the change of prediction accuracy (in percentage) with the change of the fraction of data removed (in percentage). The figure reveals that the instances selected by the Shapley value--based data importance measures are more representative than the LOO-based measures. Though TMC-Shapley and G-Shapley can achieve slightly better performance than $K$NN-Shapley, our method still retains a high performance even after reducing $50\%$ of the whole training set, which is notable.

Apart from the quantitative results above, we provide the qualitative visualization of images drawn from Tiny ImageNet in Fig.~\ref{fig:ds-viz}, where we show the images of the highest Shapley value (\ie, representative images), as well as the top 50 classes that their summarization belongs to.
It is intriguing that a similar set of images (\eg, dugong, espresso, monarch, goldfish) stand out as the most representative samples even when they are pre-processed using different feature extractors. Why these classes are more representative is an interesting open question that deserves further investigation.
Another observation is the high diversity of the top 20 images displayed, which further corroborates the capability of our Shapley enriched method in producing a high-quality miniature for the original massive dataset.


\paragraph{Active Data Acquisition.}
\phantomsection
\label{sec:da}

Annotated data is often hard and expensive to obtain, particularly for specialized domains where only experts can provide reliable labels. Active data acquisition aims to facilitate the data collection process by automatically deciding which instances an annotator should label to train a model. To simulate this scenario, we start with a small training set, and then train a random forest to predict the score for new data based on their features. We repeat the process and iteratively add new data with highest data importance to the training set.




Here, we choose MNIST as our dataset and inject noise to part of it.
We start with a small training set with $100$ images and add Gaussian white noise into half of them. We use another $100$ images to calculate the scores of training data and a held-out validation dataset of size $1000$ to evaluate the performance.
In Fig.~\ref{fig:exp-4} we plot the change of prediction accuracy with the number of added training points. Evidently, new data selected based on $K$NN-Shapley value improves model accuracy faster than all other methods.

\begin{table}[]
\scriptsize
    \centering
    \caption{\small Domain adaptation between MNIST and USPS.}\label{tab:da}
\begin{tabular}{crr} 
\toprule \textbf{Method} & \textbf{MNIST} $\rightarrow{}$ \textbf{USPS} & \textbf{USPS} $\rightarrow{}$ \textbf{MNIST} \\ 
& 
\includegraphics[width=4mm]{./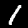}
\includegraphics[width=4mm]{./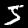} $\rightarrow{}$
\includegraphics[width=4mm]{./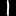}
\includegraphics[width=4mm]{./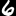}
& 
\includegraphics[width=4mm]{./domainAdaptation_data/usps2.png}
\includegraphics[width=4mm]{./domainAdaptation_data/usps1.png} $\rightarrow{}$
\includegraphics[width=4mm]{./domainAdaptation_data/mnist2.png}
\includegraphics[width=4mm]{./domainAdaptation_data/mnist1.png}
\\\midrule
$K$NN-Shapley &  \textbf{31.70\% $\rightarrow{}$ 47.00\%}  &  \textbf{23.35\% $\rightarrow{}$ 29.80\%}   \\ 
$K$NN-LOO &  31.70\% $\rightarrow{}$ 37.40\%  &  23.35\% $\rightarrow{}$ 24.50\% \\ 
TMC-Shapley &  31.70\% $\rightarrow{}$ 44.90\%  &  23.35\% $\rightarrow{}$ 29.55\%  \\ 
LOO & 31.70\% $\rightarrow{}$ 29.40\% & 23.35\% $\rightarrow{}$ 23.53\% \\
\bottomrule
\end{tabular}
\end{table}

\paragraph{Domain Adaptation.}
\phantomsection
\label{sec:adaptation}
Domain adaptation aims to leverage the dataset from one domain for the prediction tasks in another domain. 
We will show that the data importance measures will be useful for domain adaptation.
Specifically, we first compute the importance of data in the source domain with respect to a held-out set from the target domain. 
We then train the model using only positive-valued points in source domain and evaluate the model in target domain. 

We perform experiments on MNIST and USPS following the setups in Ghorbani \textit{et al.}~\cite{ghorbani2019data} and present the transfer results between the two.
We first train a multinomial logistic regression classifier. We randomly sample $1000$ images from the source domain as the training set, calculate the scores for the training data based on $1000$ instances from the target domain, and evaluate the performance of the model on another $1000$ target domain instances. The results are summarized in Table~\ref{tab:da}. As it shows, $K$NN-Shapley performs the best. 

\vspace{-1.5em}
\paragraph{Summary of Results.}
Based on extensive empirical observations, we conclude that: (1) the $K$NN-Shapley-based method requires the minimal runtime compared with the rest approaches on large scale training data and models (some methods such as TMC-Shapley cannot even finish running within reasonable time); (2) for different ML applications (e.g. mislabeled data detection, watermark removal, data summarization active data acquisition, and domain adaptation), different variants of Shapley-based methods consistently outperform the leave-one-out--based methods; (3) the $K$NN-Shapley based methods including both \textit{$K$NN-} and \textit{max-$K$NN-Shapley}, always achieve the best or at least comparable performance compared to other Shapley approximation methods.

\subsection{Comparisons of Different Embeddings}
\label{subsec:embedding}


In Section~\ref{subsec:comp} we provide results corresponding to the embeddings extracted by  MobileNett~\cite{howard2017mobilenets} classifier pre-trained on ImageNet.
In this section, we leverage different embeddings extracted by 4 other pre-trained classifiers for evaluation: ResNet18~\cite{he2016deep}, VGG11~\cite{Simonyan15vgg}, Inception-V3~\cite{szegedy2015inception}, and EfficientNet B7~\cite{tan2019efficientnet}.

We provide part of results in Fig.~\ref{fig:embedding-main} for selected applications and datasets. 
In each figure, there are clearly two groups of curves: one group for $K$NN-Shapley and the other for $K$NN-LOO.
We can see that the utility of $K$NN-LOO is roughly the same as random, while the $K$NN-Shapley presents high utility for different applications.
In conclusion, the difference induced by using different \textit{embeddings} is marginal compared to using different \textit{measures}. Furthermore, our $K$NN-Shapley based importance measure is insensitive to the selection of embeddings and can achieve superb performance without the need of carefully selecting feature extractors. 
A more comprehensive set of results are left to Appendix~\ref{appendix:sec_embedding}.

\begin{figure}
\begin{center}
\begin{subfigure}{0.49\linewidth}
\centering
\includegraphics[width=\columnwidth]{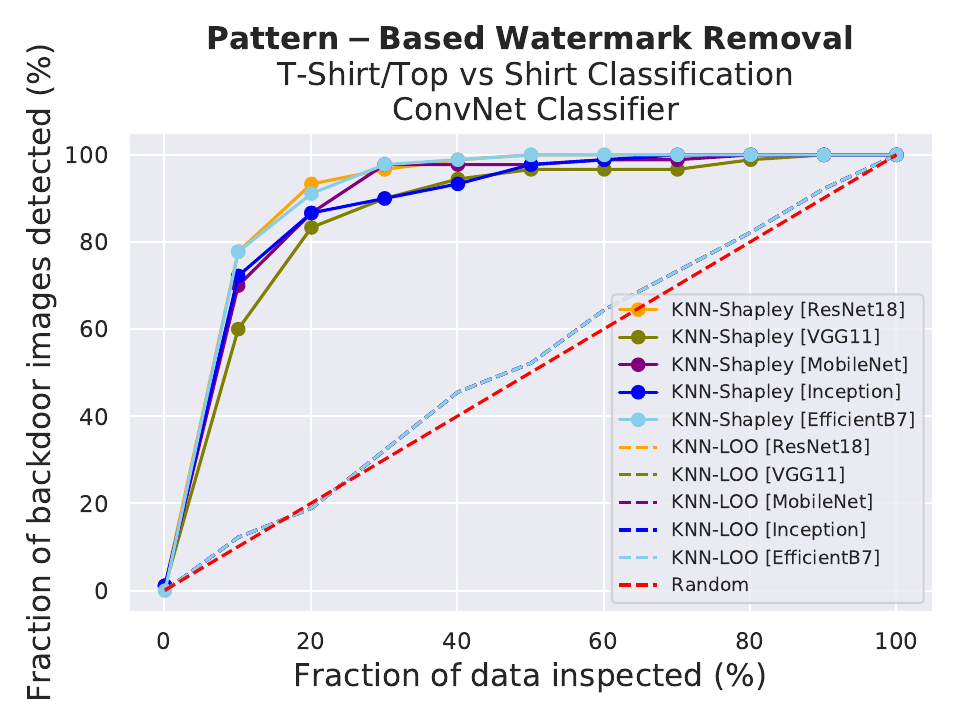}
\end{subfigure}
\begin{subfigure}{0.49\linewidth}
\centering
\includegraphics[width=\columnwidth]{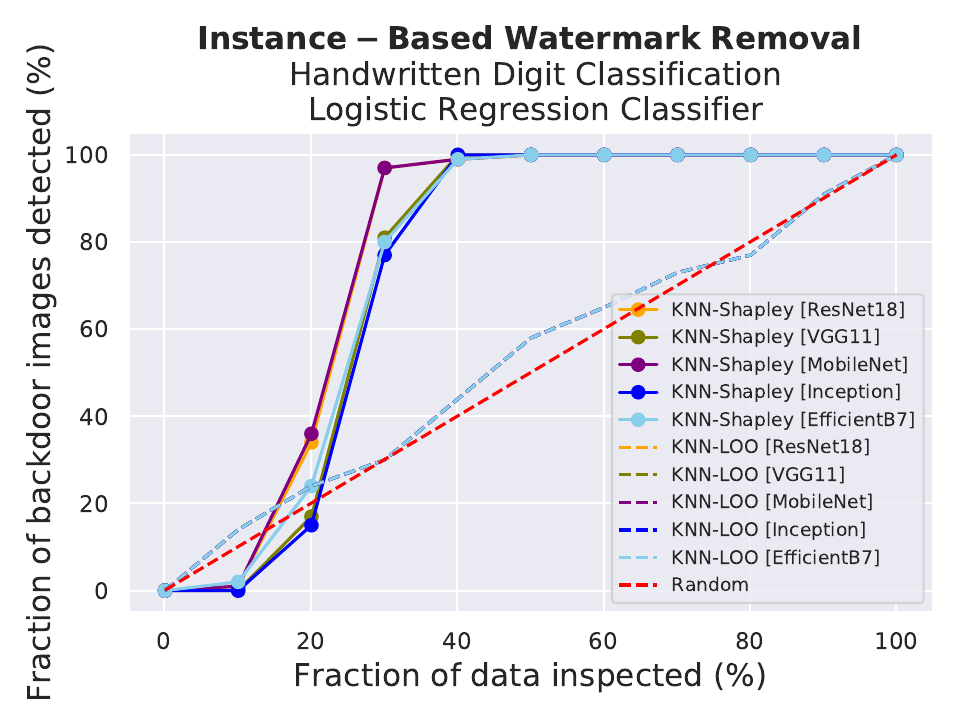}
\end{subfigure}
\begin{subfigure}{0.49\linewidth}
\centering
\includegraphics[width=\columnwidth]{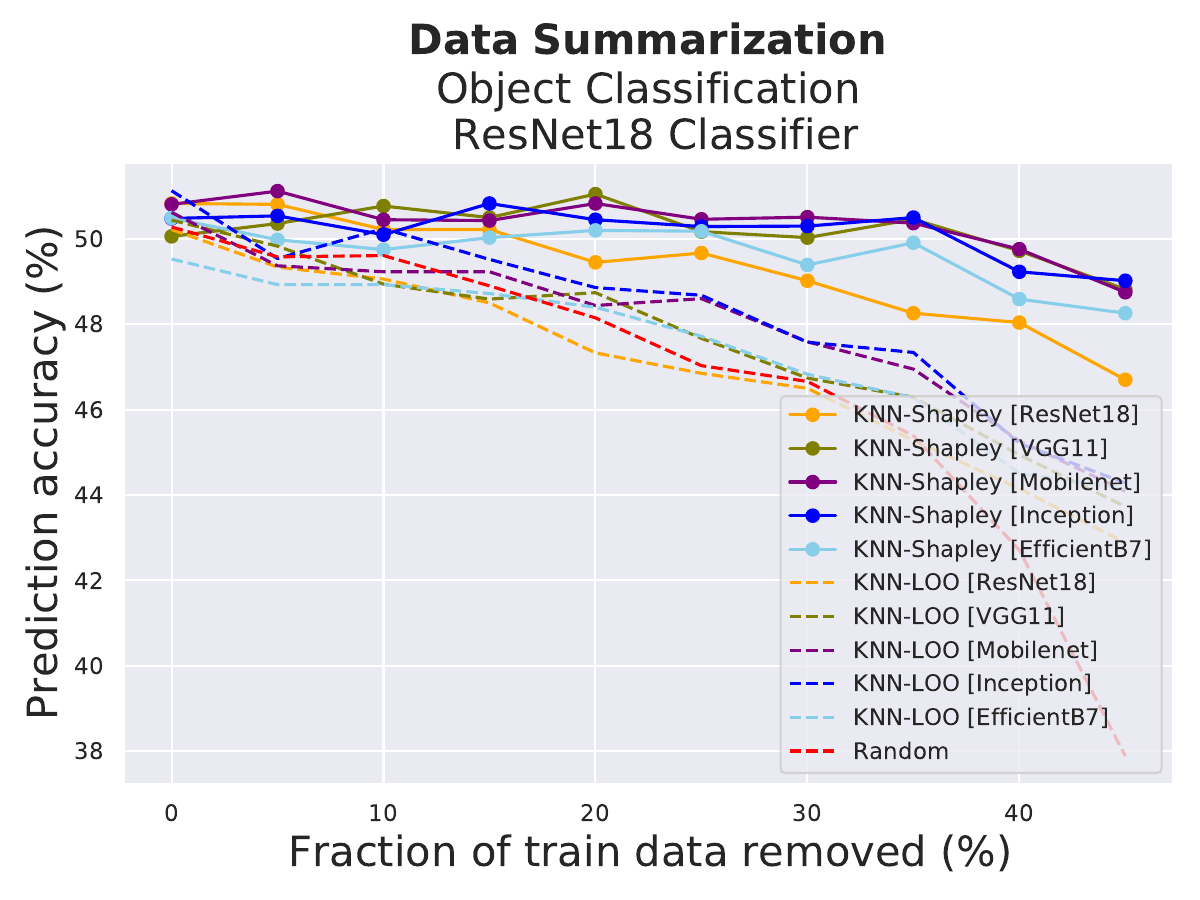}
\end{subfigure}
\begin{subfigure}{0.49\linewidth}
\centering
\includegraphics[width=\columnwidth]{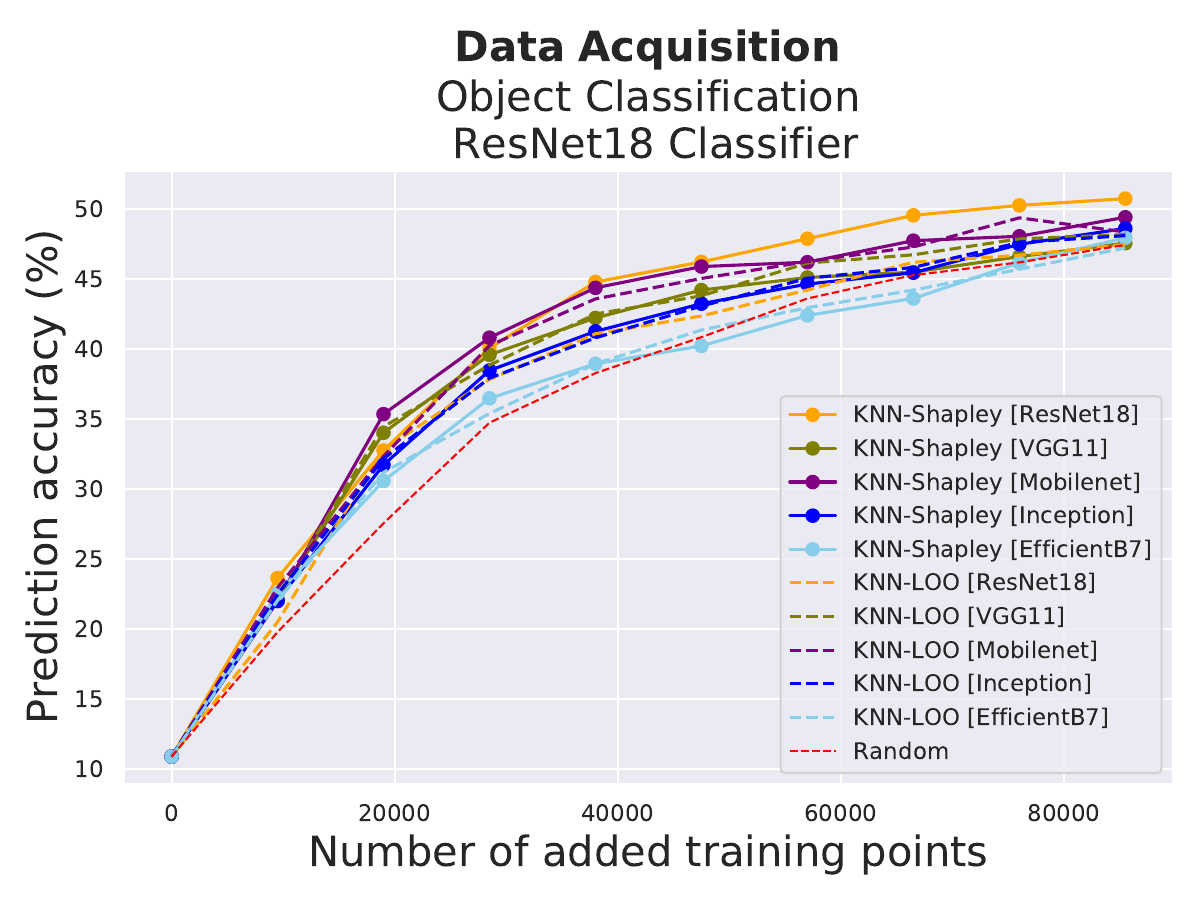}
\end{subfigure}
\end{center}
\vspace{-2mm}
\caption{\small Comparisons of different embeddings on different applications and datasets.}\label{fig:embedding-main}
\end{figure}

\section{Conclusion}
This paper provides the first theoretical and large-scale empirical studies towards answering the fundamental questions about what method should be used for evaluating data importance and how to efficiently do so. 
Particularly, we prove that the Shapley-based method provides higher utility than a leave-one-out--based approach, in terms of evaluating the predictive power of the data importance as well as the data discrimination ability. 
Extensive experiments are conducted on five applications, showing that the Shapley-based methods outperform the leave-one-out--based ones in terms of both runtime and experimental performance. Specifically, the $K$NN-Shapley approach provides the most efficient solution and usually achieves the best or comparable performance among all. 
In addition, we are the first to leverage data importance approaches to perform watermark removal, which is a challenging task currently, and achieve promising results. This particular application would shed light on future research on watermark analysis and other related tasks.

\noindent\textbf{Acknowledgement} 
\begin{small}
This work was performed under the auspices of the NSF grant \#1910100 and U.S. Department of Energy by the Lawrence Livermore National Laboratory under Contract No. DE-AC52-07NA27344, Lawrence Livermore National Security, LLC. The views and opinions of the authors do not necessarily reflect those of the U.S. government or Lawrence Livermore National Security, LLC neither of whom nor any of their employees make any endorsements, express or implied warranties or representations or assume any legal liability or responsibility for the accuracy, completeness, or usefulness of the information contained herein. LLNL-CONF-820505.
\end{small}

{\small
\bibliographystyle{ieee_fullname}
\bibliography{neurips2020_conference}
}






\setlist[enumerate]{itemsep=0mm}

\onecolumn

\appendix

\section{Proof of Theorem 2}


\begin{proof}
The proof relies on dissecting the term $\E[U(T\cup \{z_i\}) - U(T\cup \{z_j\})]$ and $\nu(z_i) - \nu(z_j)$ ($\nu=\nu_\text{shap-knn}, \nu_\text{LOO-knn}$) in the definition of order-preserving property.

Consider any two points $z_i,z_{i+l}\in D$. We start by analyzing $\E[U(T\cup \{z_i\}) - U(T\cup \{z_{i+l}\})]$. Let the $k$th nearest neighbor of $x_\text{val}$ in $T$ be denoted by $T_{(k)} = (x_{(k)},y_{(k)})$. Moreover, we will use $T_{(k)}\leq_d z_i$ to indicate that $x_{(k)}$ is closer to the validation point than $x_i$, i.e.,  $d(x_{(k)},x_\text{val})\leq d(x_i,x_\text{val})$. We first analyze the expectation of the above utility difference by considering the following cases:

(1) $T_{(K)}\leq_d z_i$. In this case, adding $z_i$ or $z_{i+l}$ into $T$ will not change the $K$-nearest neighbors to $z_\text{val}$ and therefore $U(T\cup \{z_i\}) = U(T\cup\{z_{i+l}\}) = U(T)$. Hence, $U(T\cup \{z_i\}) - U(T\cup\{z_{i+l}\}) = 0$.

(2) $z_i<_d T_{(K)}\leq_d z_{i+l}$. In this case, including the point i into T can expel the Kth nearest neighbor from the original set of K nearest neighbors while including the point $i+1$ will not change the $K$ nearest neighbors. In other words, $U(T\cup \{z_i\}) - U(T) = \frac{\mathbbm{1}[y_i = y_\text{val}] - \mathbbm{1}[y_{(K)} = y_\text{val} ]}{K}$ and $U(T\cup\{z_{i+l}\}) - U(T) = 0$. Hence, $U(T\cup \{z_i\}) - U(T\cup\{z_{i+l}\}) = \frac{\mathbbm{1}[y_i = y_\text{val}] - \mathbbm{1}[y_{(K)} = y_\text{val} ]}{K}$.

(3) $T_{(K)} >_d z_{i+l} $. In this case, including the point $i$ or $i+1$ will both change the original $K$ nearest neighbors in $T$ by excluding the $K$th nearest neighbor. Thus, $U(T\cup \{z_i\}) - U(T) = \frac{\mathbbm{1}[y_i = y_\text{val}] - \mathbbm{1}[y_{(K)} = y_\text{val} ]}{K}$ and $U(T\cup\{z_{i+l}\}) - U(T) = \frac{\mathbbm{1}[y_{i+l} = y_\text{val}] - \mathbbm{1}[y_{(K)} = y_\text{val} ]}{K}$. It follows that $U(T\cup \{z_i\}) - U(T\cup\{z_{i+l}\}) = \frac{\mathbbm{1}[y_i = y_\text{val}] - \mathbbm{1}[y_{i+l} = y_\text{val} ]}{K}$.

Combining the three cases discussed above, we have
\begin{align}
    &\E[U(T\cup \{z_i\}) - U(T\cup\{z_{i+l}\})]\\
    &=  P(T_{(K)}\leq_d z_i)\times 0 + P(z_i<_dT_{(K)}\leq_d z_{i+l})  \frac{\mathbbm{1}[y_i = y_\text{val}] - \mathbbm{1}[y_{(K)} = y_\text{val} ]}{K} \nonumber\\
    &\quad+ P(T_{(K)}>_d z_{i+l}) \frac{\mathbbm{1}[y_i = y_\text{val}] - \mathbbm{1}[y_{i+l} = y_\text{val}]}{K}\\
    \label{eqn:sum}
    & = P(z_i<_dT_{(K)}\leq_d z_{i+l})  \frac{\mathbbm{1}[y_i = y_\text{val}] - \mathbbm{1}[y_{(K)} = y_\text{val} ]}{K} \nonumber\\
    &\quad + P(T_{(K)}>_d z_{i+l}) \frac{\mathbbm{1}[y_i = y_\text{val}] - \mathbbm{1}[y_{i+l} = y_\text{val}]}{K}
\end{align}

Note that removing the first term in (\ref{eqn:sum}) cannot change the sign of the sum in (\ref{eqn:sum}). Hence, when analyzing the sign of (\ref{eqn:sum}), we only need to focus on the second term:
\begin{align}
\label{eqn:sign_exp}
   &P(T_{(K)}>_d z_{i+1}) \frac{\mathbbm{1}[y_i = y_\text{val}] - \mathbbm{1}[y_{i+l} = y_\text{val}]}{K}
\end{align}
Since $P(T_{(K)}>_d z_{i+1}) = \sum_{k=N-K+1}^N P(Z>_d z_{i+1})^k$, the sign of (\ref{eqn:sign_exp}) will be determined by the sign of $ \mathbbm{1}[y_i = y_\text{val}] - \mathbbm{1}[y_{i+l} = y_\text{val}]$. Hence, we get 
\begin{align}
\label{eqn:main}
    \big(\E[U(T\cup \{z_i\}) - U(T\cup\{z_{i+1}\})]\big)\times \big(\mathbbm{1}[y_i = y_\text{val}] - \mathbbm{1}[y_{i+l} = y_\text{val}]\big) > 0 
\end{align}

Now, we switch to the analysis of the value difference. By Theorem 1, it holds for \emph{the $K$NN-Shapley value} that
\begin{align}
\label{eqn:shapley_diff}
    &\nu_\text{shap-knn}(z_i) - \nu_\text{shap-knn}(z_{i+l}) \\
    &= \sum_{j=i}^{i+l-1}\frac{\min\{K,j\}}{jK} \big(\mathbbm{1}[y_{j} = y_\text{val}] - \mathbbm{1}[y_{j+1} = y_\text{val}]\big)\\
    \label{eqn:knn_shap_dissect}
    & =\frac{\min\{K,i\}}{iK} \mathbbm{1}[y_{i} = y_\text{val}] + \sum_{j=i}^{i+l-2} \bigg(\frac{\min\{K,j+1\}}{(j+1)K} - \frac{\min\{K,j\}}{jK}\bigg) \mathbbm{1}[y_{j+1} = y_\text{val}]\nonumber\\
    &\quad - \frac{\min\{K,i+l-1\}}{(i+l-1)K}\mathbbm{1}[y_{i+l} = y_\text{val}]
\end{align}
Note that $\frac{\min\{K,j+1\}}{(j+1)K} - \frac{\min\{K,j\}}{jK}<0$ for all $j=i,\ldots,i+l-2$. Thus, if $\mathbbm{1}[y_{i} = y_\text{val}] = 1$ and $\mathbbm{1}[y_{i+l} = y_\text{val}] = 0$, the minimum of (\ref{eqn:knn_shap_dissect}) is achieved when $ \mathbbm{1}[y_{j+1} = y_\text{val}] = 1$ for all $j = i,\ldots,i+l-2$ and the minimum value is $\frac{\min\{K,i+l-1\}}{(i+l-1)K}$, which is greater than zero. On the other hand, if $\mathbbm{1}[y_{i} = y_\text{val}] = 0$ and $\mathbbm{1}[y_{i+l} = y_\text{val}] = 1$, then the maximum of (\ref{eqn:knn_shap_dissect}) is achieved when $ \mathbbm{1}[y_{j+1} = y_\text{val}] = 0$ for all $j = i,\ldots,i+l-2$ and the maximum value is $-\frac{\min\{K,i+l-1\}}{(i+l-1)K}$, which is less than zero.

Summarizing the above analysis, we get that $\nu_\text{shap-knn}(z_i) - \nu_\text{shap-knn}(z_{i+l})$ has the same sign as $\mathbbm{1}[y_i = y_\text{val}] - \mathbbm{1}[y_{i+l} = y_\text{val}]$. By (\ref{eqn:main}), it follows that $\nu_\text{shap-knn}(z_i) - \nu_\text{shap-knn}(z_{i+l})$ also shares the same sign as $\E[U(T\cup \{z_i\}) - U(T\cup\{z_{i+1}\})]$.

To analyze the sign of \emph{the $K$NN-LOO value} difference, we first write out the expression for \emph{the $K$NN-LOO value} difference:

\begin{align}
\label{eqn:loo_diff}
    \nu_\text{loo-knn}(z_i) - \nu_\text{loo-knn}(z_{i+l}) =\left\{
    \begin{array}{cc}
        \frac{1}{K} (\mathbbm{1}[y_{i} = y_\text{val}] - \mathbbm{1}[y_{i+l} = y_\text{val}] )& \text{if $i+l\leq K$} \\
       \frac{1}{K} (\mathbbm{1}[y_{i} = y_\text{val}] - \mathbbm{1}[y_{K+1} = y_\text{val}]) & \text{if $i\leq K < i+l$} \\
        0 & \text{if $i> K$}
    \end{array}
    \right.
\end{align}

Therefore, $\nu_\text{loo-knn}(z_i) - \nu_\text{loo-knn}(z_{i+l})$ has the same sign as $\mathbbm{1}[y_{i} = y_\text{val}] - \mathbbm{1}[y_{i+l} = y_\text{val}] $ and $\E[U(T\cup \{z_i\}) - U(T\cup\{z_{i+1}\})]$ only when $i+l\leq K$. 


\end{proof}

\section{Proof of Theorem 3}

We will need the following lemmas on group differential privacy for the proof of Theorem 3.

\begin{Lm}
\label{lm:group}
If $\A$ is $(\epsilon,\delta)$-differentially private with respect to one change in the database, then $\A$ is $(c\epsilon,ce^{c\epsilon}\delta)$-differentially private with respect to $c$ changes in the database.
\end{Lm}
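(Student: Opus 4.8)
The plan is to prove the group-privacy lemma by chaining the single-change $(\epsilon,\delta)$-DP guarantee along a path of databases that are edited one record at a time. Given two databases $D$ and $D'$ that differ in $c$ records, I would first construct a sequence $D = D_0, D_1, \ldots, D_c = D'$ in which each consecutive pair $D_j, D_{j+1}$ differs in exactly one record (flip the disagreeing records one by one). Each adjacent pair then falls directly under the hypothesis that $\A$ is $(\epsilon,\delta)$-differentially private with respect to a single change.

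The core step is to iterate the defining inequality along this chain. For any $R\subseteq\mathcal{H}$, applying the single-change guarantee to $D_0,D_1$ gives $P[\A(D_0)\in R]\leq e^\epsilon P[\A(D_1)\in R]+\delta$; substituting the analogous bound for $P[\A(D_1)\in R]$ in terms of $D_2$, and continuing through the whole chain, yields after $c$ substitutions
\begin{align}
P[\A(D)\in R] \leq e^{c\epsilon}P[\A(D')\in R] + \delta\sum_{j=0}^{c-1}e^{j\epsilon}.
\end{align}
The multiplicative factor is exactly $e^{c\epsilon}$, which matches the first claimed privacy parameter. For the additive term I would bound the geometric sum crudely, noting that each of the $c$ summands is at most $e^{(c-1)\epsilon}\leq e^{c\epsilon}$ because $\epsilon\geq 0$, so $\sum_{j=0}^{c-1}e^{j\epsilon}\leq c\,e^{c\epsilon}$; this gives the claimed second parameter $c e^{c\epsilon}\delta$. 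Equivalently, one can phrase the argument as an induction on $c$ through an intermediate database differing from $D$ in $c-1$ records and from $D'$ in one record, combining the $((c-1)\epsilon,(c-1)e^{(c-1)\epsilon}\delta)$ guarantee with the base $(\epsilon,\delta)$ guarantee and using $e^{(c-1)\epsilon}\leq e^{c\epsilon}$ to absorb the constants.

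I do not expect a serious obstacle here, as this is the standard group-privacy argument; the only points requiring care are bookkeeping. The hard part will be to ensure the chain changes exactly one record per step so that every link is genuinely covered by the single-change hypothesis, to apply the DP inequality consistently in the same direction, and to track how the additive $\delta$ errors accumulate against the growing multiplicative factors. The slight looseness relative to the sharper textbook bound $c e^{(c-1)\epsilon}\delta$ comes precisely from bounding every $e^{j\epsilon}$ by $e^{c\epsilon}$, which is all that is needed for the stated lemma and for its subsequent use in Theorem~\ref{thm:dp_value}.
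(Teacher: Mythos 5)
Your chaining argument is correct and is the canonical proof of group privacy: constructing the path $D=D_0,\ldots,D_c=D'$ with one record changed per step and telescoping the single-change inequality gives $P[\A(D)\in R]\leq e^{c\epsilon}P[\A(D')\in R]+\delta\sum_{j=0}^{c-1}e^{j\epsilon}$, and bounding each summand by $e^{c\epsilon}$ yields the stated $(c\epsilon,\,ce^{c\epsilon}\delta)$ parameters. The paper states this lemma without proof, treating it as a standard group-privacy fact, so there is no in-paper argument to compare against; your derivation is exactly the standard one, and as you note it actually delivers the slightly sharper additive term $ce^{(c-1)\epsilon}\delta$ (indeed $\delta\,\frac{e^{c\epsilon}-1}{e^{\epsilon}-1}$), which the lemma's looser constant absorbs.
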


\begin{Lm}[\cite{jia2019towards}]
\label{lm:shapley_diff}
For any $z_i,z_j\in D$, the difference in Shapley values between $z_i$ and $z_j$ is 
\begin{align}
\label{eqn:shapley_diff}
    \nu_\text{shap}(z_i) - \nu_\text{shap}(z_j) = \frac{1}{N-1} \!\! \sum_{T\subseteq D\setminus\{z_i,z_j\}} \!\!\!\!\frac{U(T\cup\{z_i\}) - U(T\cup \{z_j\})}{\binom{N-2}{|T|}}
\end{align}
\end{Lm}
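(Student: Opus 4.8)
The plan is to derive the identity directly from the Shapley value definition (\ref{eqn:shap_def}), by expanding both $\nu_\text{shap}(z_i)$ and $\nu_\text{shap}(z_j)$, splitting each summation according to whether the summed subset contains the other distinguished point, and reindexing everything in terms of $T \subseteq D\setminus\{z_i,z_j\}$.

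First I would write $\nu_\text{shap}(z_i) = \frac{1}{N}\sum_{S\subseteq D\setminus\{z_i\}} \binom{N-1}{|S|}^{-1}\big[U(S\cup\{z_i\}) - U(S)\big]$ and partition the sum over $S$ into the subsets with $z_j\notin S$ and those with $z_j\in S$. In the first case I set $S=T$ with $T\subseteq D\setminus\{z_i,z_j\}$; in the second I write $S = T\cup\{z_j\}$, so that $|S|=|T|+1$. I would carry out the symmetric split for $\nu_\text{shap}(z_j)$, partitioning over $z_i\notin S$ and $z_i\in S$.

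Subtracting the two expansions term by term is the heart of the argument. The two ``excluding'' branches ($z_j\notin S$ from the first value, $z_i\notin S$ from the second) each contribute a spurious $U(T)$ that cancels under subtraction, leaving $U(T\cup\{z_i\}) - U(T\cup\{z_j\})$ weighted by $\binom{N-1}{|T|}^{-1}$; the two ``including'' branches each contribute $U(T\cup\{z_i,z_j\})$, which also cancels, leaving the same marginal difference weighted by $\binom{N-1}{|T|+1}^{-1}$. Collecting, the coefficient of $U(T\cup\{z_i\}) - U(T\cup\{z_j\})$ is $\frac{1}{N}\big(\binom{N-1}{|T|}^{-1} + \binom{N-1}{|T|+1}^{-1}\big)$.

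To finish I would verify the elementary identity $\frac{1}{N}\big(\binom{N-1}{t}^{-1} + \binom{N-1}{t+1}^{-1}\big) = \frac{1}{N-1}\binom{N-2}{t}^{-1}$ for each $t=|T|$, which drops out after expanding factorials and using $(N-1-t)+(t+1)=N$; substituting this coefficient yields exactly (\ref{eqn:shapley_diff}). I expect the term-pairing and cancellation to be the only delicate step --- the main hazard is keeping the index shift $|S|=|T|+1$ consistent in the including branches so that the binomial weights line up correctly --- while the closing combinatorial identity is routine factorial algebra.
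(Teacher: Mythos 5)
Your proof is correct: the split of each Shapley sum over whether the other point belongs to $S$, the cancellation of the $U(T)$ and $U(T\cup\{z_i,z_j\})$ terms, and the closing identity $\frac{1}{N}\bigl(\binom{N-1}{t}^{-1}+\binom{N-1}{t+1}^{-1}\bigr)=\frac{1}{N-1}\binom{N-2}{t}^{-1}$ all check out (the factorial computation gives $t!\,(N-2-t)!\,\bigl[(N-1-t)+(t+1)\bigr]/(N-1)! = N\,t!\,(N-2-t)!/(N-1)!$, as you say). The paper itself states this lemma with a citation to \citet{jia2019towards} and gives no proof, and your derivation is exactly the standard pairing argument used there, so there is nothing to compare beyond noting that your write-up is a correct, self-contained replacement for the omitted proof.
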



\begin{proof}
Let $S'$ be the set with one element in $S$ replaced by a different value. Let the probability density/mass defined by $\mathcal{A}(S')$ and $\mathcal{A}(S)$ be $p(h)$ and $p'(h)$, respectively. Using Lemma~\ref{lm:group}, for any $z_\text{val}$ we have
\begin{align}
    \E_{h\sim \A(S)} l(h,z_\text{val}) &= \int_{0}^1 P_{h\sim \A(S)} [l(h,z_\text{val})> t ]dt\\
    &\leq \int_0^1 (e^{c\epsilon} P_{h\sim \A(S')} [l(h,z_\text{val})> t ] + ce^{c\epsilon}\delta) dt\\
    & = e^{c\epsilon} \E_{h\sim \A(S')}[l(h,z_\text{val})] + ce^{c\epsilon}\delta
\end{align}
It follows that
\begin{align}
    \E_{h\sim \A(S)} l(h,z_\text{val}) - \E_{h\sim \A(S')}[l(h,z_\text{val})]& \leq (e^{c\epsilon} - 1)\E_{h\sim \A(S')}[l(h,z_\text{val})] + ce^{c\epsilon}\delta\\
    &\leq e^{c\epsilon} - 1 + ce^{c\epsilon}\delta
\end{align}

By symmetry, it also holds that
\begin{align}
    \E_{h\sim \A(S')} l(h,z_\text{val}) - \E_{h\sim \A(S)}[l(h,z_\text{val})]& \leq (e^{c\epsilon} - 1)\E_{h\sim \A(S)}[l(h,z_\text{val})] + ce^{c\epsilon}\delta\\
    &\leq e^{c\epsilon} - 1 + ce^{c\epsilon}\delta
\end{align}
Thus, we have the following bound:
\begin{align}
\label{eqn:group_change}
   | \E_{h\sim \A(S)} l(h,z_\text{val}) - \E_{h\sim \A(S')}[l(h,z_\text{val})]| \leq  e^{c\epsilon} - 1 + ce^{c\epsilon}\delta
\end{align}

Denoting $\epsilon' = e^{c\epsilon} - 1 + ce^{c\epsilon}\delta$. For the performance measure that evaluate the loss averaged across multiple validation points $U(S) = -\frac{1}{M}\sum_{i=1}^M \E_{h\sim \A(S)}l(h,z_{\text{val},i})$, we have
\begin{align}
    |U(S)-U(S')|\leq \epsilon'
\end{align}
Making the dependence on the training set size explicit, we can re-write the above equation as
\begin{align}
    \max_{z_i,z_j\in D, T\subseteq D\setminus\{z_i,z_j\}} |U(T\cup z_i) - U(T\cup z_j)| \leq \epsilon'(|T|+1)
\end{align}
By Lemma~\ref{lm:shapley_diff}, we have for all $z_i,z_j\in D$,
\begin{align}
    \nu_\text{shap}(z_i) - \nu_\text{shap}(z_j) &\leq  \frac{1}{N-1} \sum_{k=0}^{N-2} \sum_{T\subseteq D\setminus\{z_i,z_j\}, |T| =k} \frac{\epsilon'(k+1)}{\binom{N-2}{k}}\\
    & = \frac{1}{N-1} \sum_{k=0}^{N-2} \epsilon'(k+1)\\
    & = \frac{1}{N-1} \sum_{k=1}^{N-1} \epsilon'(k)
\end{align}

As for the LOO value, we have
\begin{align}
    \nu_{loo}(z_i ) - \nu_{loo}(z_j) &= U(D\setminus \{z_j\}) - U(D\setminus \{z_i\})\\
    &\leq \epsilon'(N-1)
\end{align}

\end{proof}

\section{Comparing the LOO and the Shapley Value for Stable Learning algorithms}
\label{appendix:stable}

An algorithm $G$ has uniform stability $\gamma$ with respect to the loss function $l$ if $\|l(G(S),\cdot) - l(G(S^{\setminus i}),\cdot)\|_\infty \leq \gamma$ for all $i\in \{1,\cdots,|S|\}$, where $S$ denotes the training set and $S^{\setminus i}$ denotes the one by removing $i$th element of $S$. 

\begin{Thm}
\label{thm:stable}
For a learning algorithm $\A(\cdot)$ with uniform stability $\beta = \frac{C_\text{stab}}{|S|}$, where $|S|$ is the size of the training set
and $C_\text{stab}$ is some constant. Let the performance measure be $U(S) = -\frac{1}{M}\sum_{i=1}^M l(A(S),z_{\text{val},i})$. Then,

\begin{align}
    \max_{z_i\in D} \nu_\text{loo}(z_i) - \nu_\text{loo}(z^*) \leq \frac{C_\text{stab}}{N-1}
\end{align}
and
\begin{align}
    \max_{z_i\in D} \nu_\text{shap}(z_i) - \nu_\text{shap}(z^*) \leq \frac{C_\text{stab} (1+\log(N-1))}{N-1}
\end{align}

\end{Thm}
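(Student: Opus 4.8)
The plan is to follow the same two-stage template as the proof of Theorem~\ref{thm:dp_value}: first convert the stability assumption into a uniform bound on how much $U$ can change under a single data modification, then substitute that bound into the closed-form expressions for the LOO value and, via Lemma~\ref{lm:shapley_diff}, for the Shapley value, and sum up. The key structural observation I would exploit is that the reference point $z^*$ is a dummy (null) point, meaning its marginal contribution vanishes, so $U(S\cup\{z^*\}) = U(S)$ for every $S\subseteq D$. This is exactly what makes the comparison $\nu(z_i)-\nu(z^*)$ a measure of how distinguishable $z_i$ is from a useless point, and it is the device that will keep the constants sharp.

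First I would establish the elementary inequality $|U(S) - U(S\setminus\{z\})| \le \frac{C_\text{stab}}{|S|}$ for any $z\in S$. Since $U(S) = -\frac{1}{M}\sum_{m=1}^M l(\A(S),z_{\text{val},m})$ is an average of losses and $\A$ has uniform stability $\frac{C_\text{stab}}{|S|}$, each summand changes by at most $\frac{C_\text{stab}}{|S|}$ in sup-norm when one training point is removed, so the triangle inequality over the $M$ validation points gives the claim. This is the stability analogue of the group-change bound~(\ref{eqn:group_change}) used in the DP proof.

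For the LOO value I would use the null property twice to rewrite the gap as a single removal from a set of size $N-1$: since $U(D)=U(D\setminus\{z^*\})$ and $U(D\setminus\{z_i\})=U(D\setminus\{z_i,z^*\})$, we get $\nu_\text{loo}(z_i)-\nu_\text{loo}(z^*) = U(D\setminus\{z^*\}) - U(D\setminus\{z_i,z^*\})$, and applying the removal bound at cardinality $N-1$ yields $\frac{C_\text{stab}}{N-1}$. For the Shapley value I would set $z_j=z^*$ in Lemma~\ref{lm:shapley_diff}; the null property collapses each numerator $U(T\cup\{z_i\}) - U(T\cup\{z^*\})$ into the single-removal marginal $U(T\cup\{z_i\}) - U(T)$, which is bounded by $\frac{C_\text{stab}}{|T|+1}$. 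Grouping the sum over $T\subseteq D\setminus\{z_i,z^*\}$ by cardinality $k$, the $\binom{N-2}{k}$ subsets of each size exactly cancel the $\binom{N-2}{k}$ in the denominator, leaving $\frac{C_\text{stab}}{N-1}\sum_{k=0}^{N-2}\frac{1}{k+1} = \frac{C_\text{stab}}{N-1}H_{N-1}$, which I then bound using $H_{N-1}\le 1+\log(N-1)$.

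The step I expect to be the main obstacle is getting the constants right rather than the mechanics of the summation. The naive route bounds a swap difference $|U(T\cup\{z_i\}) - U(T\cup\{z_j\})|$ by inserting $U(T)$ and applying stability twice, which loses a factor of $2$ and would not reproduce the stated bounds. The clean statement hinges on comparing against a null player so that one half of every swap is exact, reducing each relevant difference to a single removal; this is also what lets the LOO denominator be $N-1$ rather than $N$, by routing $U(D)$ through $U(D\setminus\{z^*\})$ before removing $z_i$.
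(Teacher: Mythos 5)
Your proposal is correct and follows essentially the same skeleton as the paper's proof: convert the stability assumption into a per-cardinality bound on utility differences, substitute into Lemma~\ref{lm:shapley_diff}, group the sum over $T$ by $|T|=k$ so that the $\binom{N-2}{k}$ factors cancel, and finish with $\sum_{k=1}^{N-1}1/k\le 1+\log(N-1)$; the LOO case is a single application of the stability bound at cardinality $N-1$. The one place you genuinely diverge is the treatment of $z^*$, and your version is arguably the more careful one. The paper's proof asserts $|U(T\cup\{z_i\})-U(T\cup\{z_j\})|\le C_\text{stab}/(|T|+1)$ for arbitrary pairs directly ``by the definition of uniform stability'' and never invokes any special property of $z^*$; under the removal-based definition of uniform stability stated in Appendix~\ref{appendix:stable}, that swap bound actually costs a factor of $2$ (insert $U(T)$ and apply stability twice), so the paper's constants are exact only if one silently upgrades to replacement stability. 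Your null-player device --- taking $z^*$ to satisfy $U(S\cup\{z^*\})=U(S)$ so that every comparison collapses to a single removal --- recovers the stated constants without that factor, at the price of proving the bound only against a dummy reference point rather than for all pairs. Either reading is consistent with the theorem statement, where $z^*$ is left undefined; just note that the paper's own argument establishes the stronger pairwise claim modulo the hidden factor of $2$, while yours establishes exactly what is stated with the constants as written.
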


\begin{proof}
By the definition of uniform stability, it holds that
\begin{align}
    \max_{z_,z_j\in D,T\subseteq D\setminus\{z_i,z_j\}} |U(T\cup\{z_i\}) - U(T\cup\{z_j\})|\leq \frac{C_\text{stab}}{|T|+1}
\end{align}
Using Lemma~\ref{lm:shapley_diff}, we have
we have for all $z_i,z_j\in D$,
\begin{align}
    &\nu_\text{shap}(z_i) - \nu_\text{shap}(z_j) \\
    &\leq  \frac{1}{N-1} \sum_{k=0}^{N-2} \sum_{T\subseteq D\setminus\{z_i,z_j\}, |T| =k} \frac{C_\text{stab}}{\binom{N-2}{k}(k+1)}\\
& = \frac{1}{N-1} \sum_{k=0}^{N-2}\frac{C_\text{stab}}{k+1}
\end{align}
Recall the bound on the harmonic sequences
\begin{align*}
    \sum_{k=1}^N \frac{1}{k} \leq 1 + \log (N)
\end{align*}
which gives us
\begin{align*}
    \nu_\text{shap}(z_i) - \nu_\text{shap}(z_j)\leq \frac{C_\text{stab}(1+\log(N-1))}{N-1}
\end{align*}

As for the LOO value, we have

\begin{align}
    \nu_{loo}(z_i ) - \nu_{loo}(z_j) &= U(D\setminus \{z_j\}) - U(D\setminus \{z_i\})\leq \frac{C_\text{stab}}{N-1}
\end{align}

\end{proof}

\section{Additional Experiments}

\subsection{Rank Correlation with Ground Truth Shapley Value}
{
We perform experiments to compare the ground truth Shapley value of raw data and the value estimates produced by different heuristics. The ground truth Shapley value is computed using the group testing algorithm in~\cite{jia2019towards}, which can approximate the Shapley value with provable error bounds. We use a fully-connected neural network with three hidden layers as the target model. Following the setting in~\cite{jia2019towards}, we construct a size-1000 training set using MNIST, which contains both benign and adversarial examples, as well as a size-100 validation set with pure adversarial examples. The adversarial examples are generated by the Fast Gradient Sign Method~\cite{goodfellow2014explaining}. This construction is meant to simulate data with different levels of usefulness. In the above setting, the adversarial examples in the training set should be more valuable than the benign data because they can improve the prediction on adversarial examples. Note that the $K$NN-Shapley computes the Shapley value of deep features extracted from the penultimate layer. 

The rank correlation of $K$NN-Shapley and G-Shapley with the ground truth Shapley value is 0.08 and 0.024 with p-value 0.0046 and 0.4466, respectively. It shows that both heuristics may not be able to preserve the exact rank of the ground truth Shapley value. Since TMC-Shapley cannot finish in a week for this model and data size, we omit it from comparison. We further apply some local smoothing to the scores and check whether these heuristics can produce large scores for data groups with large Shapley values. Specifically, we compute 1 to 100 percentiles of the Shapley values, find the group of data points within each percentile interval, and compute the average Shapley value as well as the average heuristic scores for each group. The rank correlation of the average $K$NN-Shapley and the average G-Shapley with the average ground truth Shapley value for these data groups are 0.22 and -0.002 with p-value 0.0293, 0.9843, respectively. We can see that although ignoring the data contribution for feature learning, $K$NN-Shapley can better preserve the rank of the Shapley value in a macroscopic level than G-Shapley. 
}

\section{Experiment Details and Results on More Datasets}
\label{appendix:sec_exp}
In this section, we present experiment details and results on more datasets corresponding to the applications introduced in the main body (see Section 4).

\subsection{Detecting Noisy Labels}
\label{appendix:sec_detect}

Following Ghorbani \textit{et al.}~\cite{ghorbani2019data}, we conducted another two experiments: a Naive Bayes model trained on a spam classification dataset and a logistic regression model trained on Inception-V3 features of a flower classification dataset. The noise flipping ratio is 20\% and 10\% respectively for these two datasets. The performance

\begin{figure}[htbp]
\begin{center}
\begin{subfigure}{0.255\textwidth}
\centering
\includegraphics[width=\columnwidth]{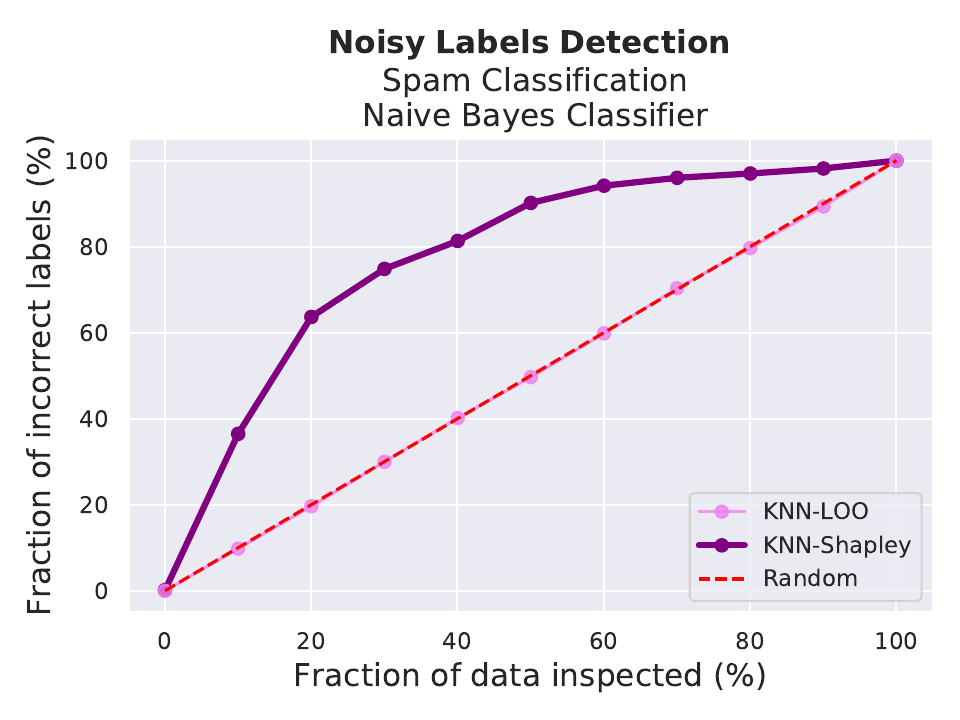}
\caption{}\label{fig:detection-a}
\end{subfigure}
\begin{subfigure}{0.255\textwidth}
\centering
\includegraphics[width=\columnwidth]{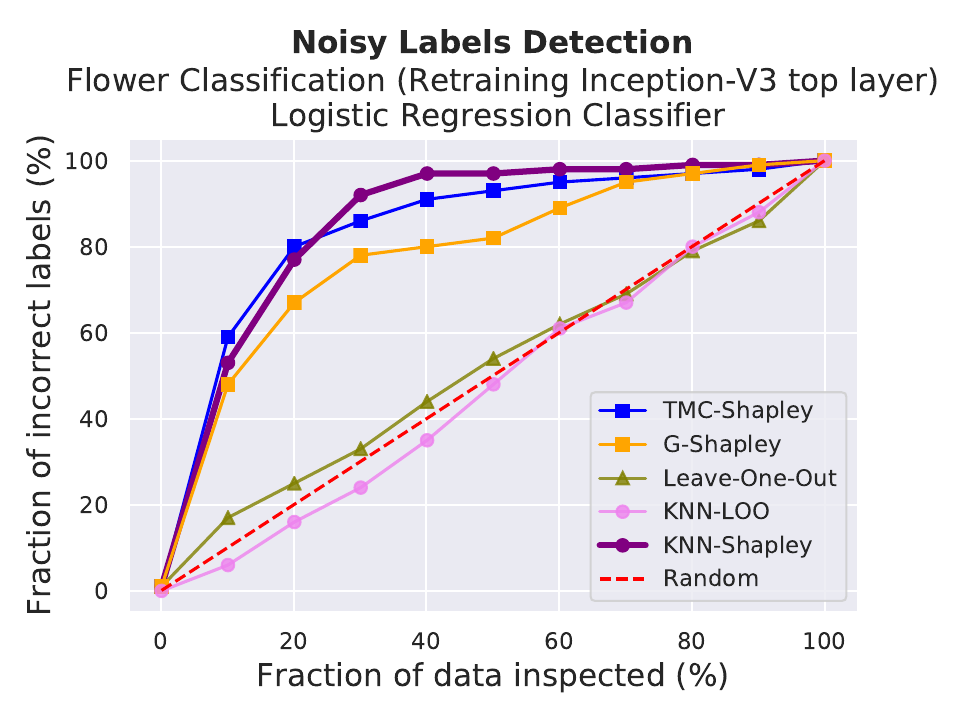}
\caption{}\label{fig:detection-b}
\end{subfigure}
\begin{subfigure}{0.267\textwidth}
\centering
\includegraphics[width=\columnwidth]{figs/watermark2.png}
\caption{}\label{fig:watermark2}
\end{subfigure}
\begin{subfigure}{0.2\textwidth}
\centering
\includegraphics[width=\columnwidth]{figs/watermark3.png}
\caption{}\label{fig:watermark3}
\end{subfigure}
\end{center}
\caption{\small (a-b) Results of noisy label detection on Spam Dataset and Flower Dataset; (c-d) Examples of watermarks generated by pattern-based techniques and instance-based techniques.  }\label{fig:overall-2}
\end{figure}

The performance of different data importance measures is illustrated in Fig.~\ref{fig:detection-a} and Fig.~\ref{fig:detection-b}. We examine the label of the training instances that have the lowest scores, and plot the change of the fraction of detected mislabeled data with the fraction of the checked training data. We can see that \emph{the $K$NN-Shapley value} outperforms all other methods. Also, the Shapley value--based measures, including TMC-Shapley, G-Shapley, and our KNN-Shapley, are more effective than the LOO-based measures.

\subsection{Watermark Removal}
\label{appendix:sec_removal}

We discuss two main types of techniques for injecting watermarks. The pattern-based techniques inject a set of samples that are blended with the same pattern and labeled with one certain class into the training set; the data contributor can later verify the data source of the trained model by checking the output of the model for an input with the pattern. The instance-based techniques, by contrast, inject individual training samples labeled with a specific class as watermarks and the verification can be done by inputting the same samples into the trained model. 


In pattern-based watermark removal, we adopted two types of patterns: one is to change the pixel values at the corner of an image~\cite{chen2018detecting}, another is to blend a specific word (like ``TEST'') into an image, as shown in Figure~\ref{fig:watermark2}. Specifically, after an image is blended with the ``TEST'' pattern, there is high chance that it is classified as the target label, e.g, an ``automobile'' on CIFAR-10.
The first pattern is used in the experiments on fashion MNIST and MNIST, which is composed of single channel images. The second pattern is applied to Pubfig-83 which contains multi-channel images.

In instance-based watermark removal, we used the same watermarks as~\cite{adi2018turning}, which contains a set of abstract images with specific assigned labels. The example of a trigger image is shown in Figure~\ref{fig:watermark3}. This type of watermarks are typically chosen from out-of-distribution data.

\begin{figure}[htbp]
\begin{center}
\begin{subfigure}{0.32\textwidth}
\centering
\includegraphics[width=\columnwidth]{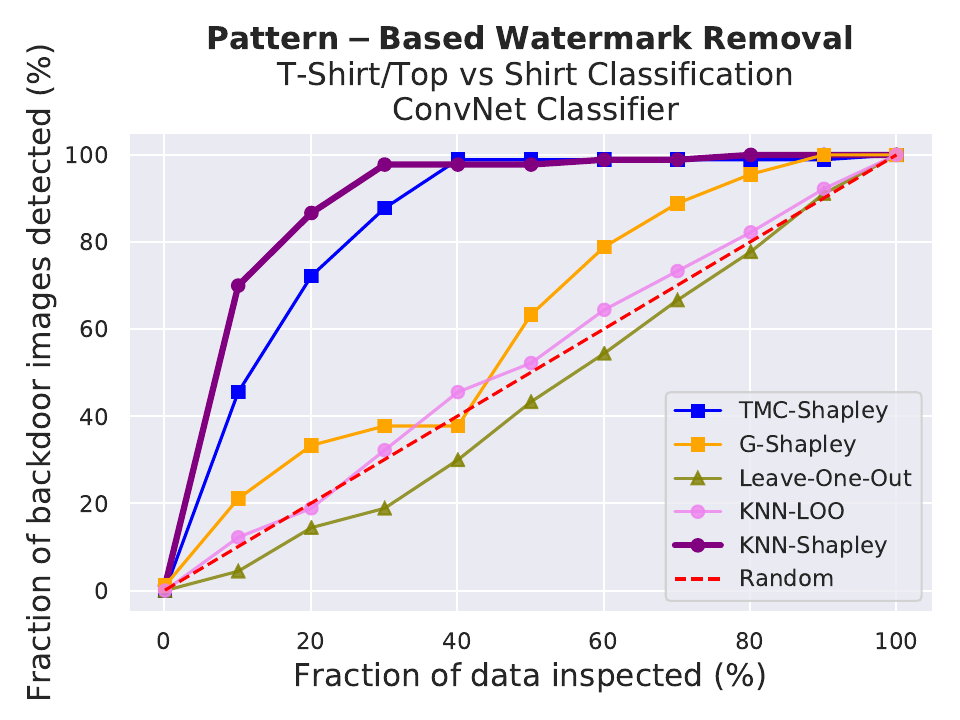}
\end{subfigure}
\begin{subfigure}{0.32\textwidth}
\centering
\includegraphics[width=\columnwidth]{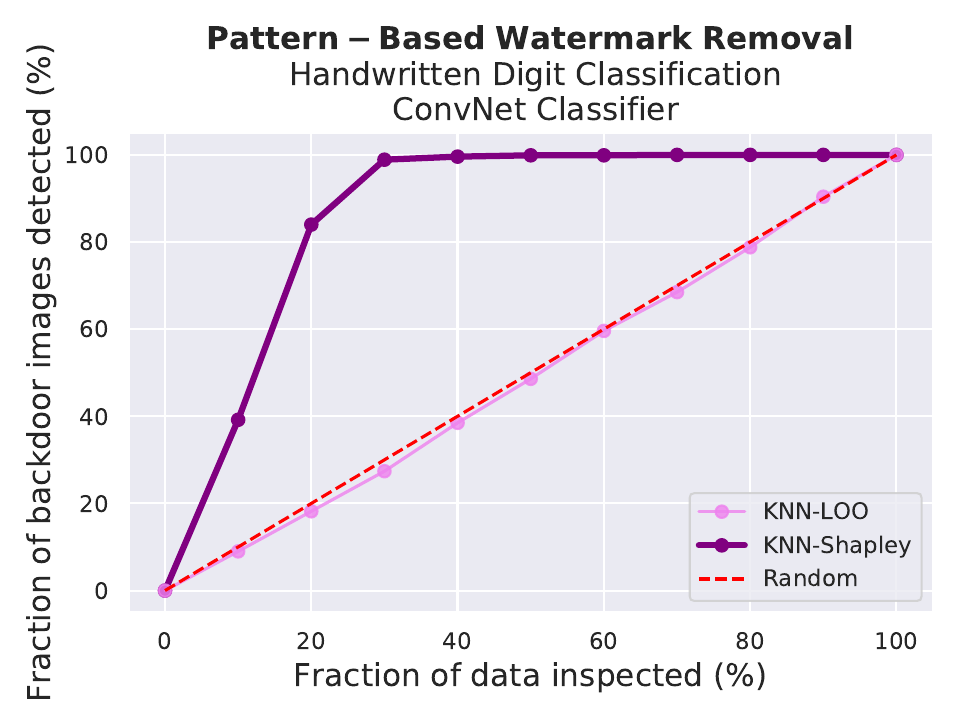}
\end{subfigure}
\begin{subfigure}{0.32\textwidth}
\centering
\includegraphics[width=\columnwidth]{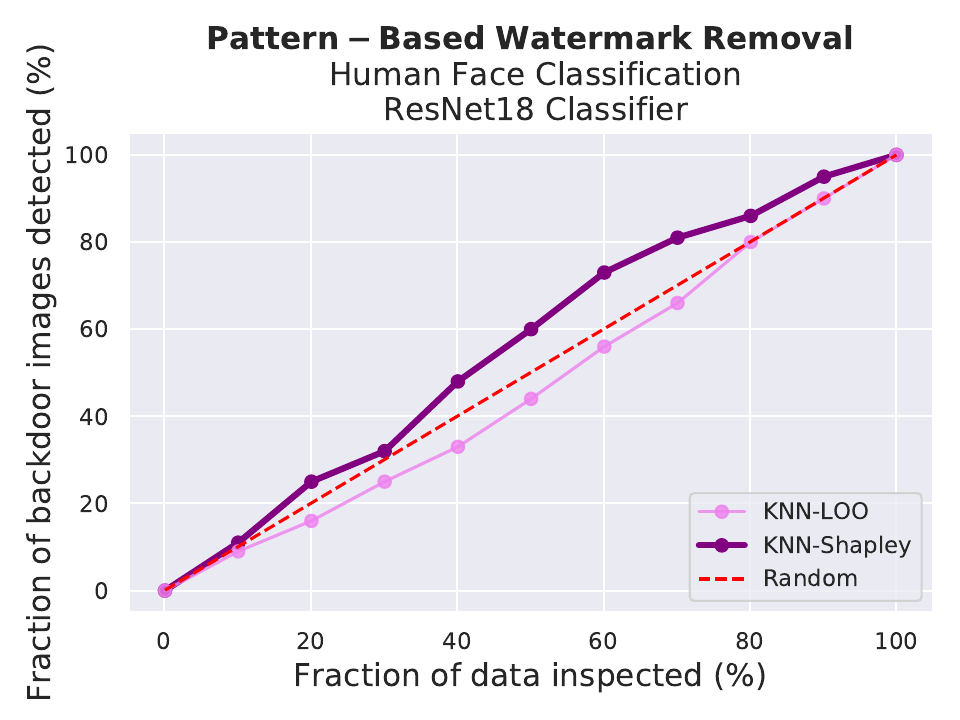}
\end{subfigure}
\end{center}
\caption{\small Results of pattern-based watermark removal tasks on (a) Fashion-MNIST Dataset~\cite{xiao2017fashion}, (b) MNIST Dataset~\cite{lecun2010mnist}, and (c) PubFig-83 Dataset~\cite{kumar2009pubfig}.}\label{fig:poison}
\end{figure}

For the pattern-based watermark removal experiment, we consider three settings: two convolutional networks trained on 1000 images from fashion MNIST and 10000 images from MNIST, respectively, and a ResNet18~\cite{he2016deep} model trained on 1000 images from the face recognition dataset Pubfig-83. The watermark ratio is 10\% for all three settings. Since for the last two settings, TMC-Shapley, G-Shapley, and Leave-one-out all fail to produce importance estimates in 3 hours either due to large data size or model size, we compare our algorithm only with the rest of the baselines. 
In plotting Fig.~\ref{fig:poison}, we examine the label of the training instances that have the lowest scores and plot the change of the faction of the detected watermarks (in percentage) with the fraction of the checked training data (in percentage).
Although TMC-Shapley can achieve similar performance to KNN-Shapley, its time complexity is actually much higher than KNN-Shapley. Compared with all other baselines, our $K$NN-Shapley outperforms achieves the best performance.

For the instance-based watermark removal experiment, we consider the following two settings: a convolution network trained on 3000 images from CIFAR-10~\cite{cifar10}, and ResNet18 trained on 3000 images from SVHN~\cite{Netzer2011ReadingDI}. The watermark ratio is 3\% in both settings. The results of our experiment are displayed in Fig.~\ref{fig:instance-b} and Fig.~\ref{fig:instance-c}. We plot the change of the fraction of the detected watermarks (in percentage) with the fraction of the checked training data (in percentage).

\begin{figure}[htbp]
\begin{center}
\begin{subfigure}{0.32\textwidth}
\centering
\includegraphics[width=\columnwidth]{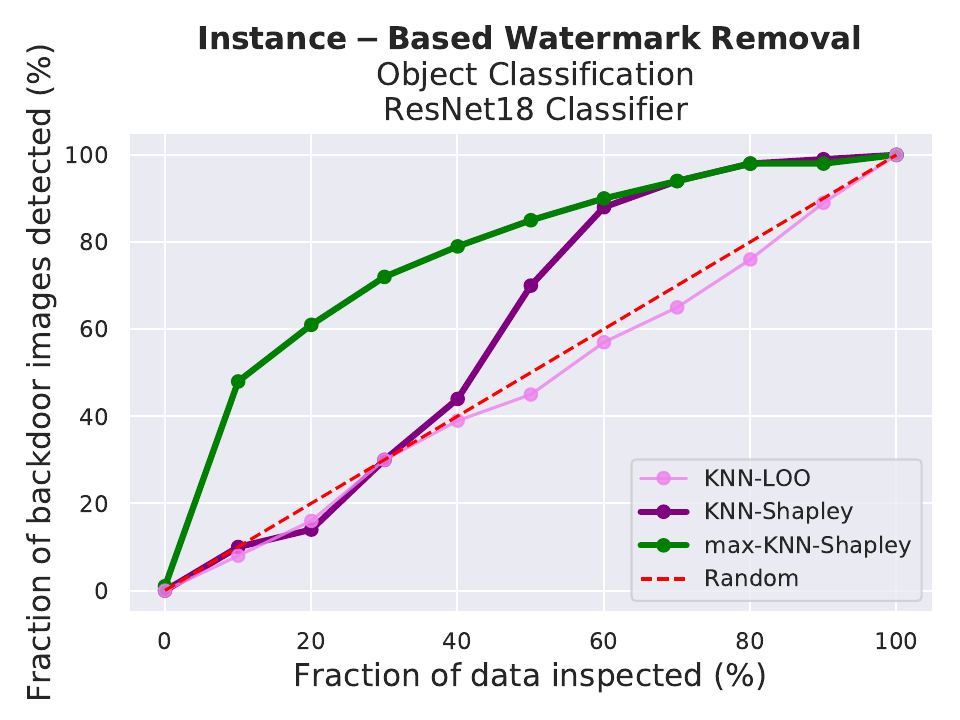}
\caption{}\label{fig:instance-b}
\end{subfigure}
\begin{subfigure}{0.32\textwidth}
\centering
\includegraphics[width=\columnwidth]{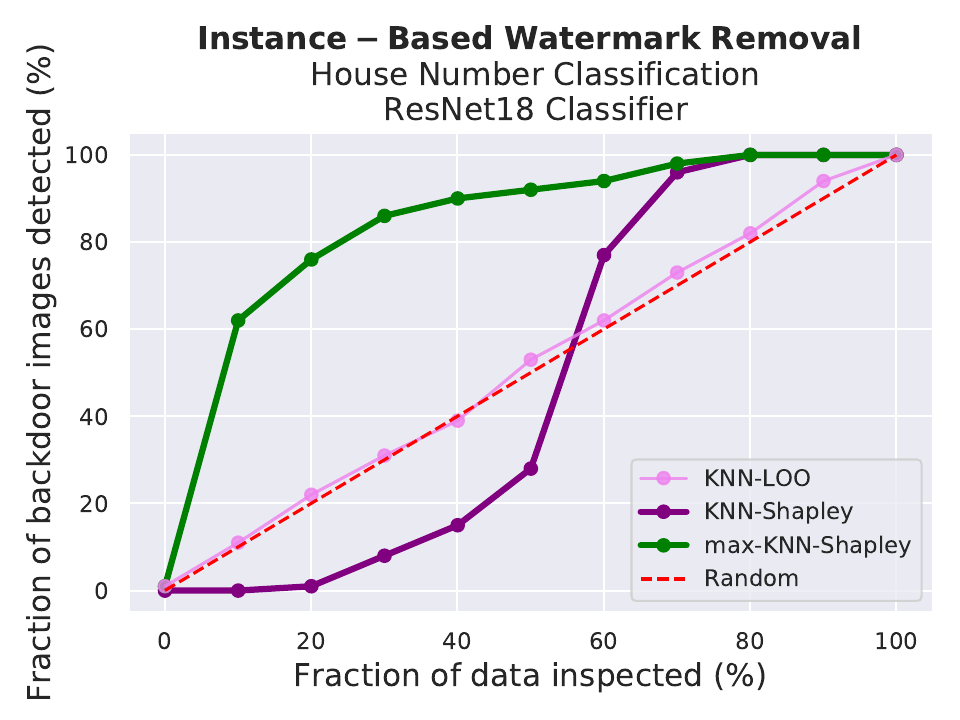}
\caption{}\label{fig:instance-c}
\end{subfigure}
\begin{subfigure}{0.32\textwidth}
\centering
\includegraphics[width=\columnwidth]{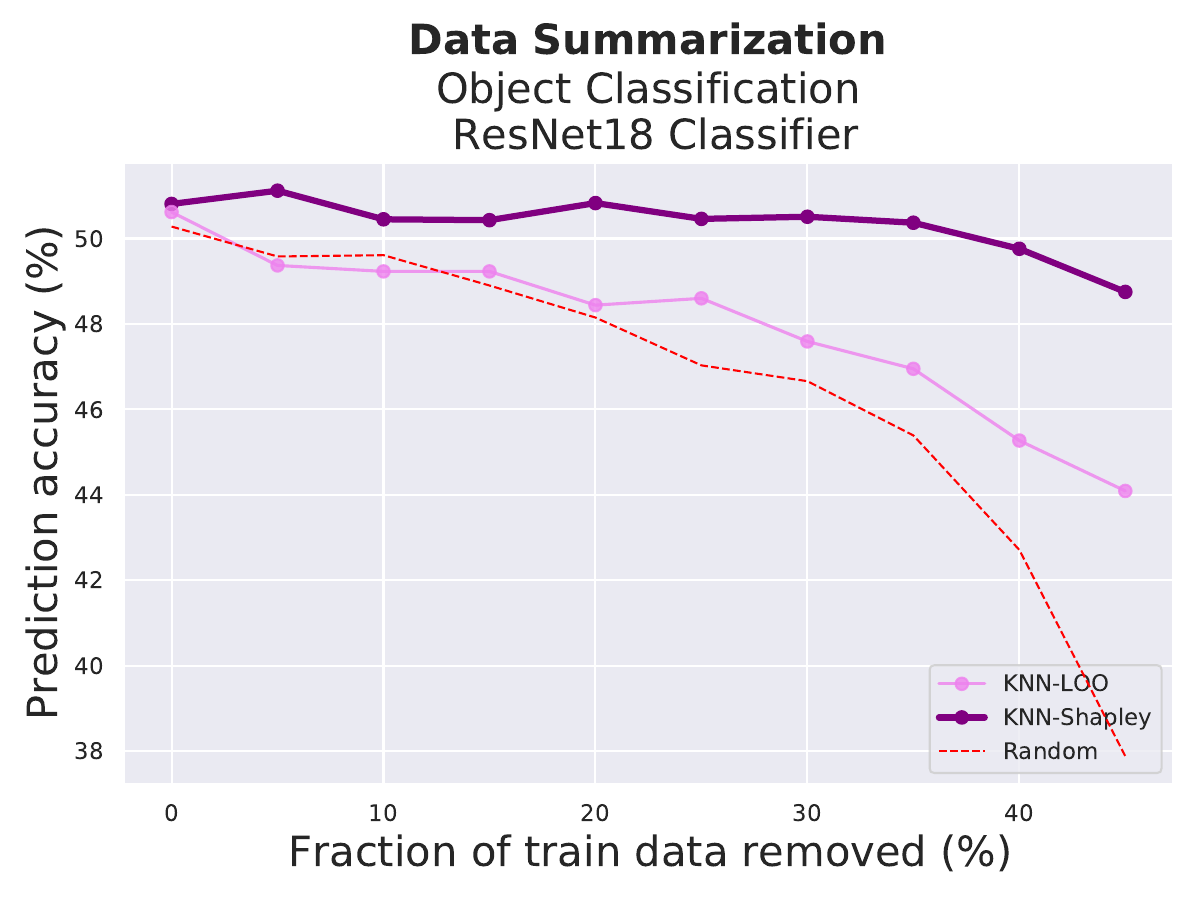}
\caption{}\label{fig:ds-tiny}
\end{subfigure}
\end{center}
\caption{\small Results of (a-b) instance-based watermark removal tasks on CIFAR-10~\cite{cifar10} and SVHN; (c) Data summarization on Tiny ImageNet~\cite{Le2015TinyIV}.}\label{fig:overall-3}
\end{figure}

As discussed in Section 4.3, we propose a novel measure max-$K$NN-Shapley to tackle the instance-based watermark removal task specifically. As shown in Fig.~\ref{fig:instance-b} and Fig.~\ref{fig:instance-c}, the max-$K$NN-Shapley is a more effective measure to detect instance-based watermarks than all other baselines. 

\begin{table}[H]
\centering
\footnotesize
\centering
\caption{\small 
\textbf{Instance-based watermark removal.} Prediction accuracy on different types of data.
}\label{tab:acc}
\begin{tabular}{cccc}
\toprule \multirow{2}{*}{\textbf{\scriptsize Data Type}} & \textbf{\scriptsize Handwritten Digit}  & \textbf{\scriptsize Object} & \textbf{\scriptsize House Number}  \\ & \textbf{\scriptsize (Logistic Regression)}  & \textbf{\scriptsize (ResNet18)} & \textbf{\scriptsize (ResNet18)}  \\ 
\midrule
Benign Data & 0.998 & 0.981 & 1.000   \\ 
Watermark Data & 0.980 & 1.000 & 1.000   \\ \bottomrule
\end{tabular}
\end{table}

We additionally measure the prediction accuracy of the watermarked model on both benign and watermark instances and provide the results in Table~\ref{tab:acc}.
The results indicate that the amount of watermarks we added satisfies our purpose of claiming the ownership of the data source.

\subsection{Data Summarization}
\label{appendix:sec_ds}

For the experiment on UCI Adult Census dataset~\cite{ron1996uci} introduced in Section 4.3, we train the same multilayer perceptron model as~\cite{chen2018differentially}. 


We consider another setting for this application: a ResNet-18 trained on Tiny ImageNet. In this setting, we use $95000$ points as the training set, $5000$ points to calculate the scores, and another $10000$ points as the held-out validation set.
In Fig.~\ref{fig:ds-tiny}, we plot the change of prediction accuracy (in percentage) with the change of the fraction of data removed (in percentage).
As it reveals, $K$NN-Shapley is able to maintain model performance even after removing $40\%$ of the whole training set. However, TMC-Shapley, G-Shapley, and LOO cannot finish in $24$ hours and hence are omitted from the figure.

In this experiment, we fine-tune the pretrained ResNet18 from He \textit{et al.}.
We train the ResNet18 with $15$ epochs and learning rate $0.001$ with SGD optimizer~\cite{kiefer1952sgd} and the model achieves an accuracy of $77.95$\% on the training set. Then, we extract the deep features of the training set and calculate their Shapley values. When evaluating the model performance on the summarized dataset, we re-train the ResNet18 with $30$ epochs and learning rate $0.01$.

\begin{figure}[htbp]
\begin{center}
\begin{subfigure}{0.31\textwidth}
\centering
\includegraphics[width=\columnwidth]{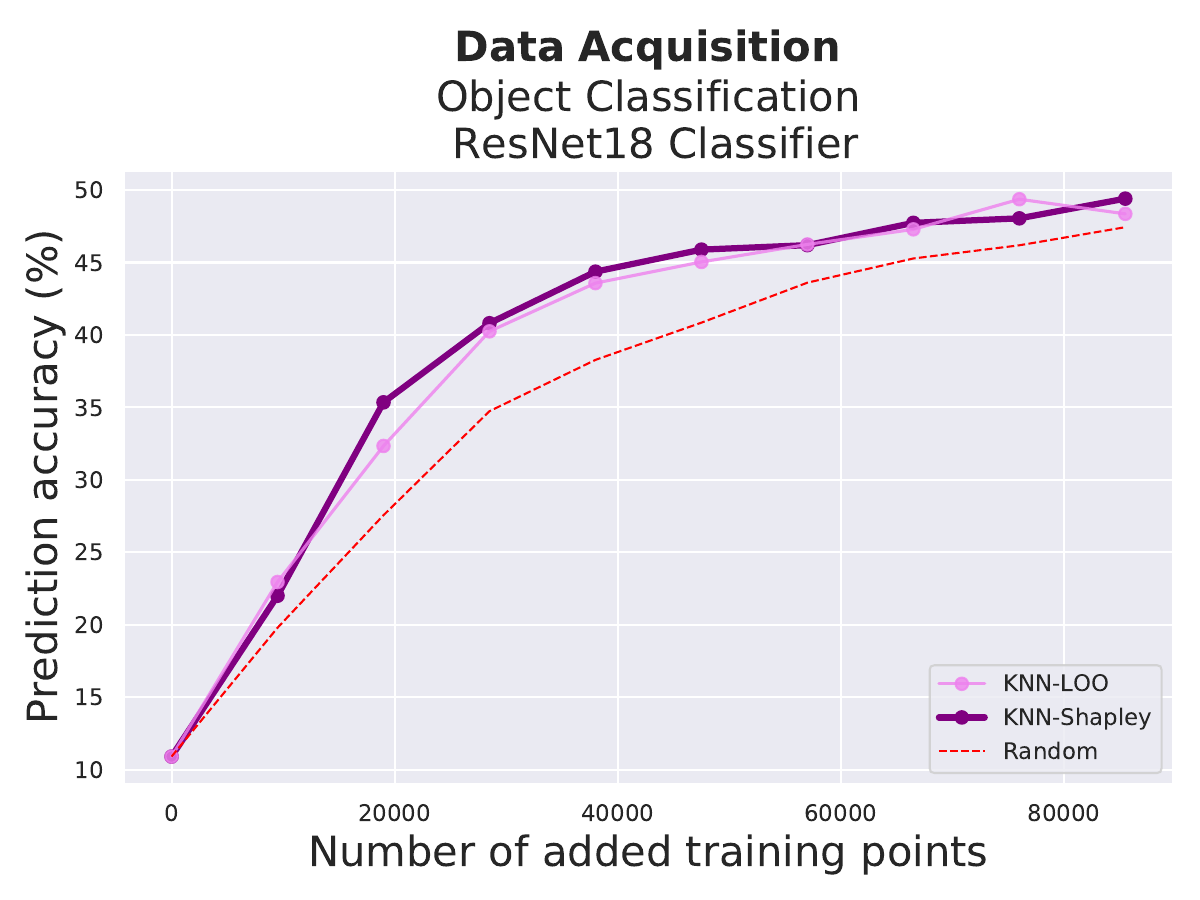}
\caption{}\label{fig:da-tiny}
\end{subfigure}
\begin{subfigure}{0.31\textwidth}
\centering
\includegraphics[width=\columnwidth]{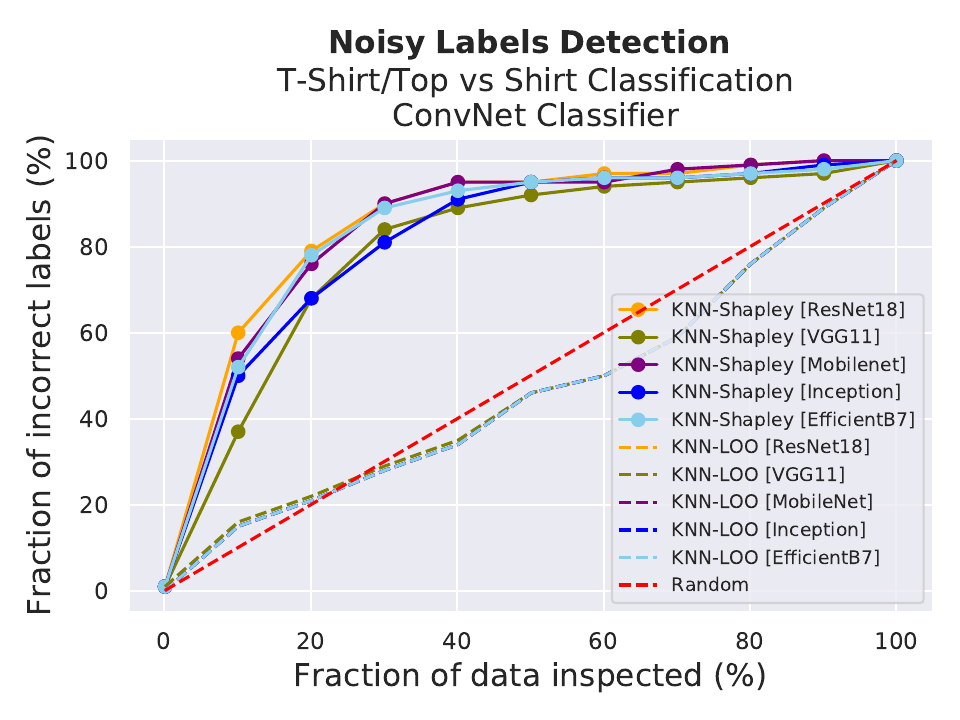}
\caption{}\label{fig:label-emb}
\end{subfigure}
    \begin{subtable}[H]{0.35\textwidth}
    \scriptsize
        \centering
\begin{tabular}{crr} 
\toprule \textbf{Method} & \textbf{SVHN} $\rightarrow{}$ \textbf{MNIST}  \\ 
& 
\includegraphics[width=5mm]{./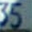}
\includegraphics[width=5mm]{./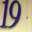} $\rightarrow{}$
\includegraphics[width=5mm]{./domainAdaptation_data/mnist2.png}
\includegraphics[width=5mm]{./domainAdaptation_data/mnist1.png}\\
\\\midrule
$K$NN-Shapley & \textbf{9.65\% $\rightarrow{}$ 14.1\%}  \\ 
$K$NN-LOO &  9.65\% $\rightarrow{}$ 9.3\% \\ 
TMC-Shapley & ---  \\ 
LOO & --- \\
\bottomrule
\end{tabular}
      \caption{}
      \label{tab:adaptation}
\end{subtable}
\end{center}
\caption{\small (a) Data acquisition on Tiny ImageNet; (b) results of different embeddings of noisy label detection on Fashion-MNIST; (c) Domain adaptation on SVHN$\rightarrow$MNIST}
\end{figure}

\subsection{Data Acquisition}\label{appendix:sec_da}


We follow the same protocols as in Section 4.3 to conduct the experiments on Tiny ImageNet, which in nature has realistic variation of data quality. We separate the training set into two parts with 5000 training points and 95000 new points. We calculate importannce of 2500 data points in the training set based on the other 2500 points.
In Fig.~\ref{fig:da-tiny} we plot the change of prediction accuracy with the number of added training points. Evidently, new data selected based on $K$NN-Shapley value improves model accuracy faster than all other methods.

\subsection{Domain Adaptation}
\label{appendix:sec_adaptation}
In section 4.3 we elaborated on the transfer between MNIST and USPS\footnote{\footnotesize from \url{https://www.kaggle.com/bistaumanga/usps-dataset}}, where we trained a multinomial logistic regression classifier. Here, we introduce another experiment on transferring from SVHN to MNIST. In this experiment, we train a ResNet18 model using $15$ epochs and learning rate $0.001$ with SGD optimizer on SVHN, since multinomial logistic regression is too simple to perform well in this setting. We pick $2000$ training data from SVHN, train a ResNet-18 model, and evaluate the performance on the whole test set of MNIST. $K$NN-Shapley is able to work on data of this scale efficiently while TMC-Shapley algorithm simply cannot finish in 48 hours. As shown in Table~\ref{tab:adaptation}, our KNN-Shapley achieves better performance than KNN-LOO.

\section{Impact of Different Embeddings}
\label{appendix:sec_embedding}


In Section 4.3 we provide the result corresponding to the embedding extracted by one single feature extractor for each dataset. In this section, for all the aforementioned experiments, we tried different embeddings extracted using five pre-trained classifiers including ResNet18, VGG11~\cite{Simonyan15vgg}, MobileNet~\cite{howard2017mobilenets}, Inception-V3~\cite{szegedy2015inception}, and EfficientNet B7~\cite{tan2019efficientnet}.

Illustrated in Fig.~\ref{fig:label-emb} is the comparison of two data importance measures: KNN-Shapley and KNN-LOO, each applied to five different embeddings. This experiment is carried out on the Fashion-MNIST dataset for the task of detecting noisy labels.
Notably, the five curves of KNN-Shapley are close to each other, and the same trend can also be observed for the five curves of KNN-LOO. Apart from this observation, the scores given by KNN-LOO are roughly the same as random, while our KNN-Shapley are all much higher. As a conclusion, the influence induced by using different embeddings is marginal compared to using different measures. Furthermore, our KNN-Shapley data importance measure can achieve terrific performance without the need of carefully selecting embeddings. We provide a comprehensive set of results in Fig.~\ref{fig:embedding}, where similar conclusions can be drawn.

As a supplement to Fig. 3 in the main body, similarly, in Fig.~\ref{fig:ds-viz-rest}, we provide the top 20 images with highest Shapley value, as well as the top 50 classes after the summarization step for each of the following embeddings: Resnet18, Inception-V3, and EfficientNet B7. As can be observed, there is a large range of overlap among the top classes for all these embeddings, which we believe is an intriguing phenomenon to study and will inspire future research.

\begin{figure}
\begin{center}
\begin{subfigure}{0.3\textwidth}
\centering
\includegraphics[width=\columnwidth]{figs/Poisoning_embedding_a.pdf}
\end{subfigure}
\begin{subfigure}{0.3\textwidth}
\centering
\includegraphics[width=\columnwidth]{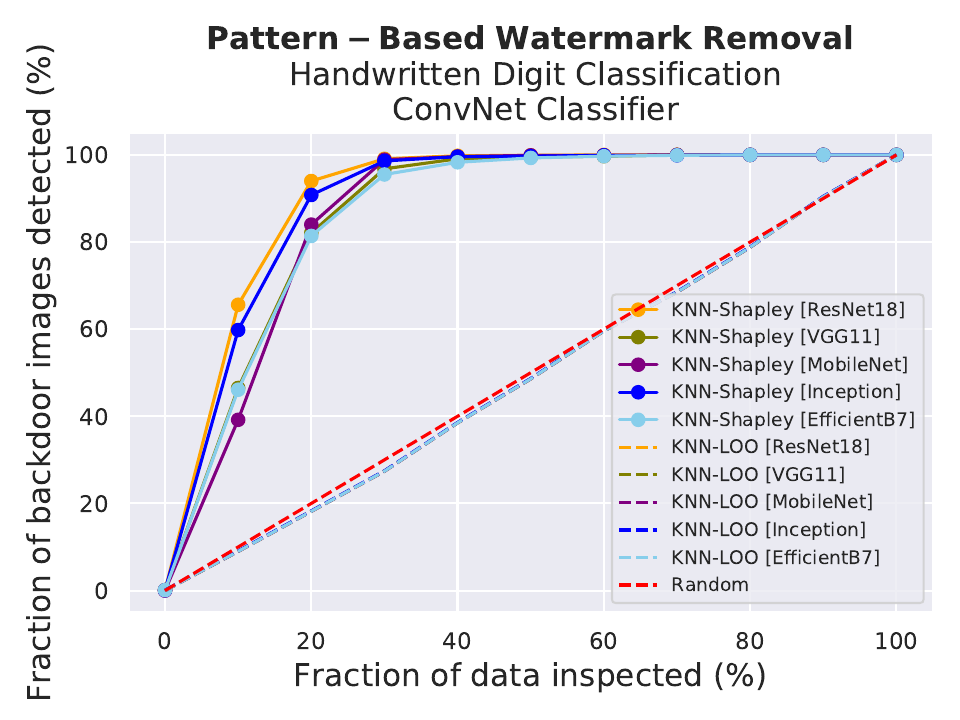}
\end{subfigure}
\begin{subfigure}{0.3\textwidth}
\centering
\includegraphics[width=\columnwidth]{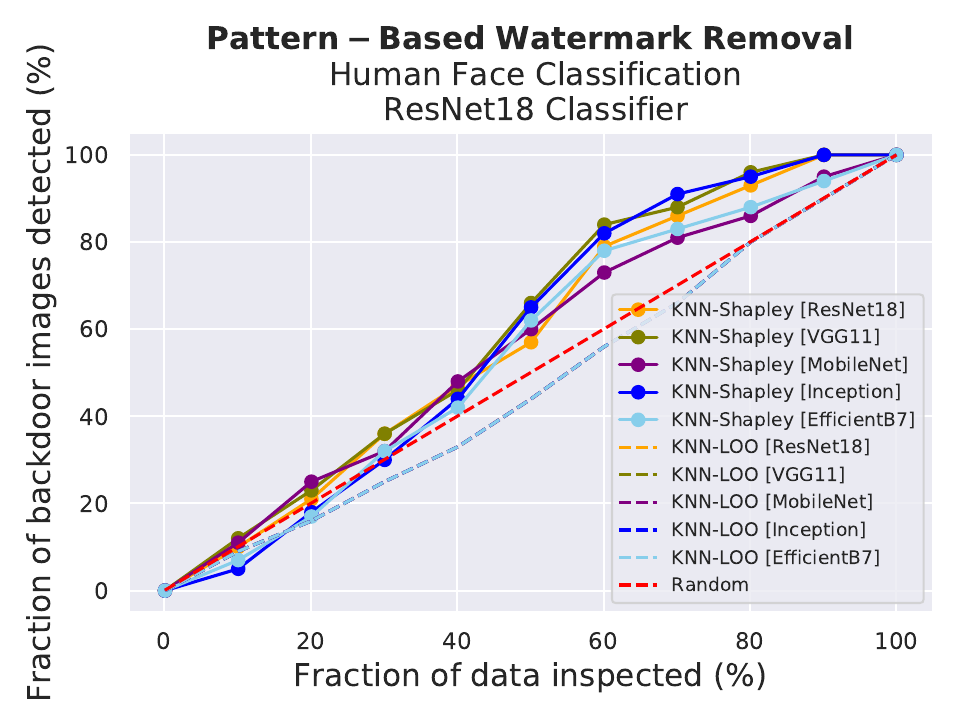}
\end{subfigure}
\begin{subfigure}{0.3\textwidth}
\centering
\includegraphics[width=\columnwidth]{figs/Watermarking_embedding_a.pdf}
\end{subfigure}
\begin{subfigure}{0.3\textwidth}
\centering
\includegraphics[width=\columnwidth]{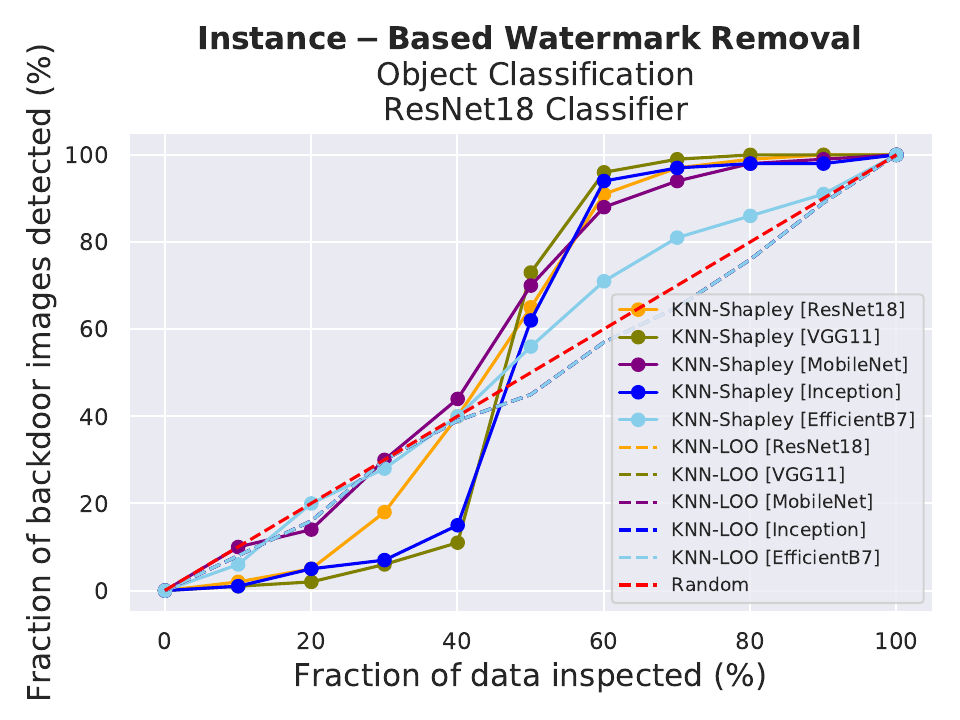}
\end{subfigure}
\begin{subfigure}{0.3\textwidth}
\centering
\includegraphics[width=\columnwidth]{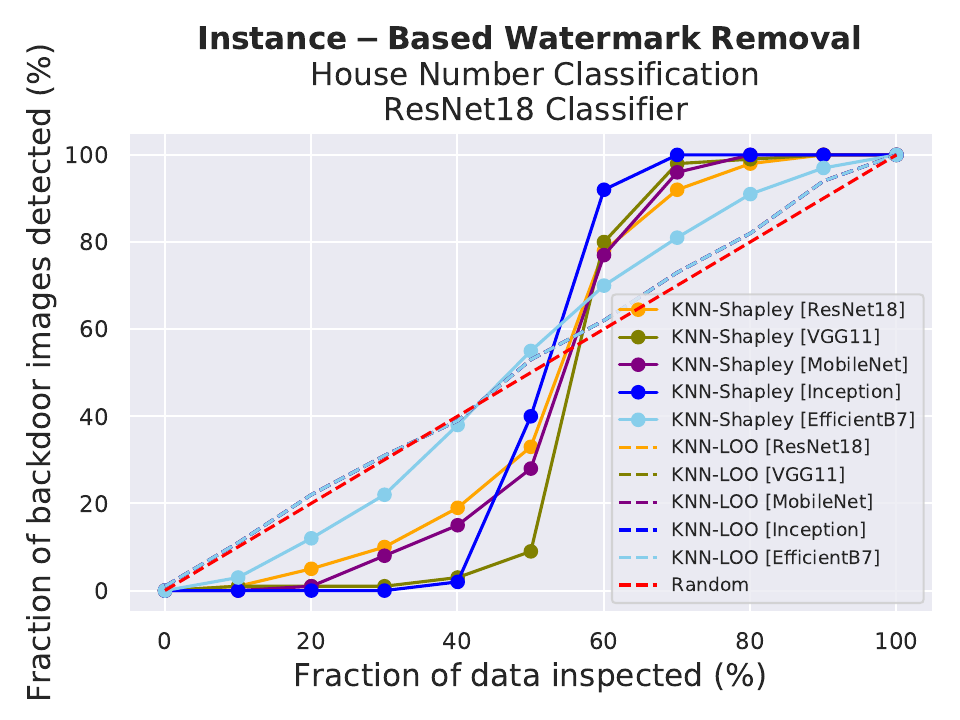}
\end{subfigure}
\begin{subfigure}{0.3\textwidth}
\centering
\includegraphics[width=\columnwidth]{figs/Compare_DS_Tiny_H2L.pdf}
\end{subfigure}
\begin{subfigure}{0.3\textwidth}
\centering
\includegraphics[width=\columnwidth]{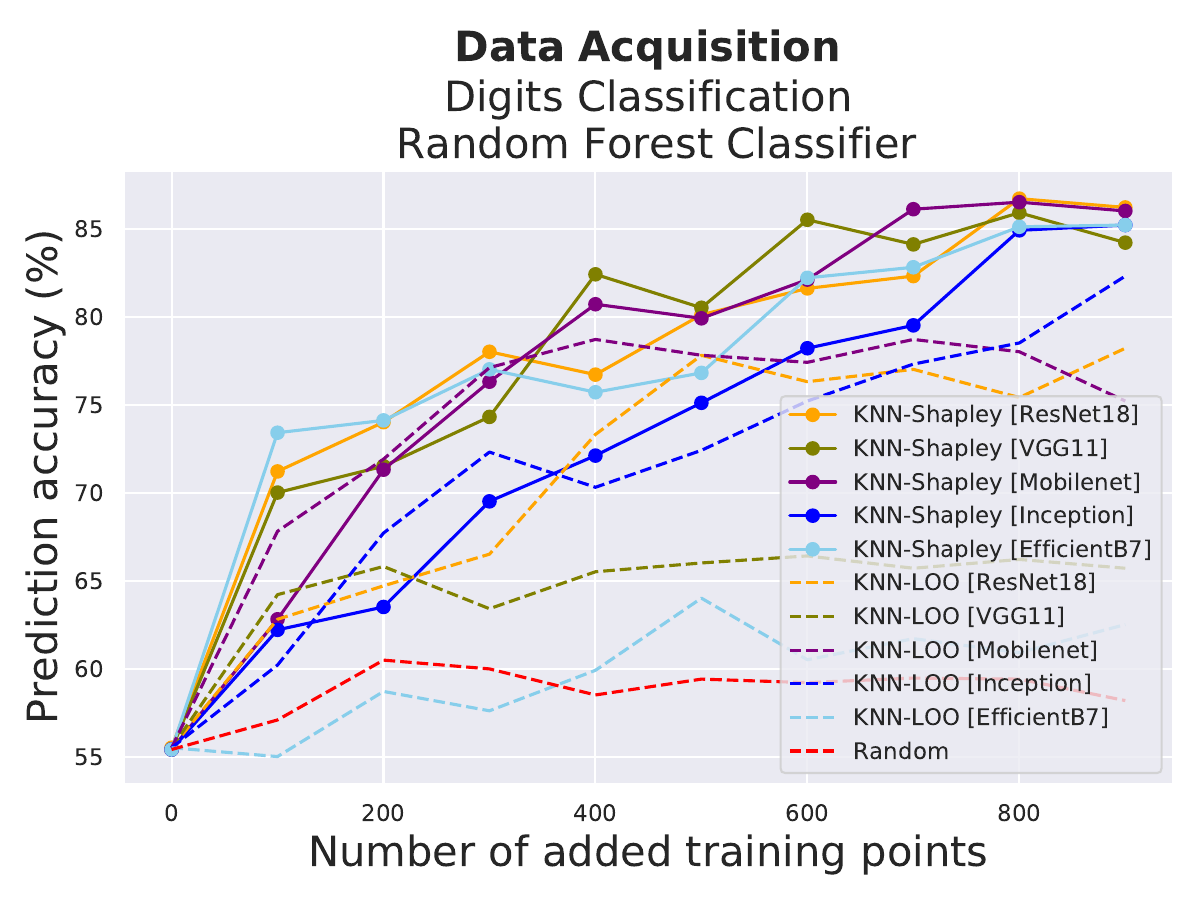}
\end{subfigure}
\begin{subfigure}{0.3\textwidth}
\centering
\includegraphics[width=\columnwidth]{figs/Compare_DA_Tiny_L2H.pdf}
\end{subfigure}
\end{center}
\caption{\small Comparisons of different embeddings on different datasets and different applications}\label{fig:embedding}
\end{figure}

\begin{figure}[H]
\begin{subfigure}{0.3\textwidth}
\centering
\includegraphics[width=\linewidth]{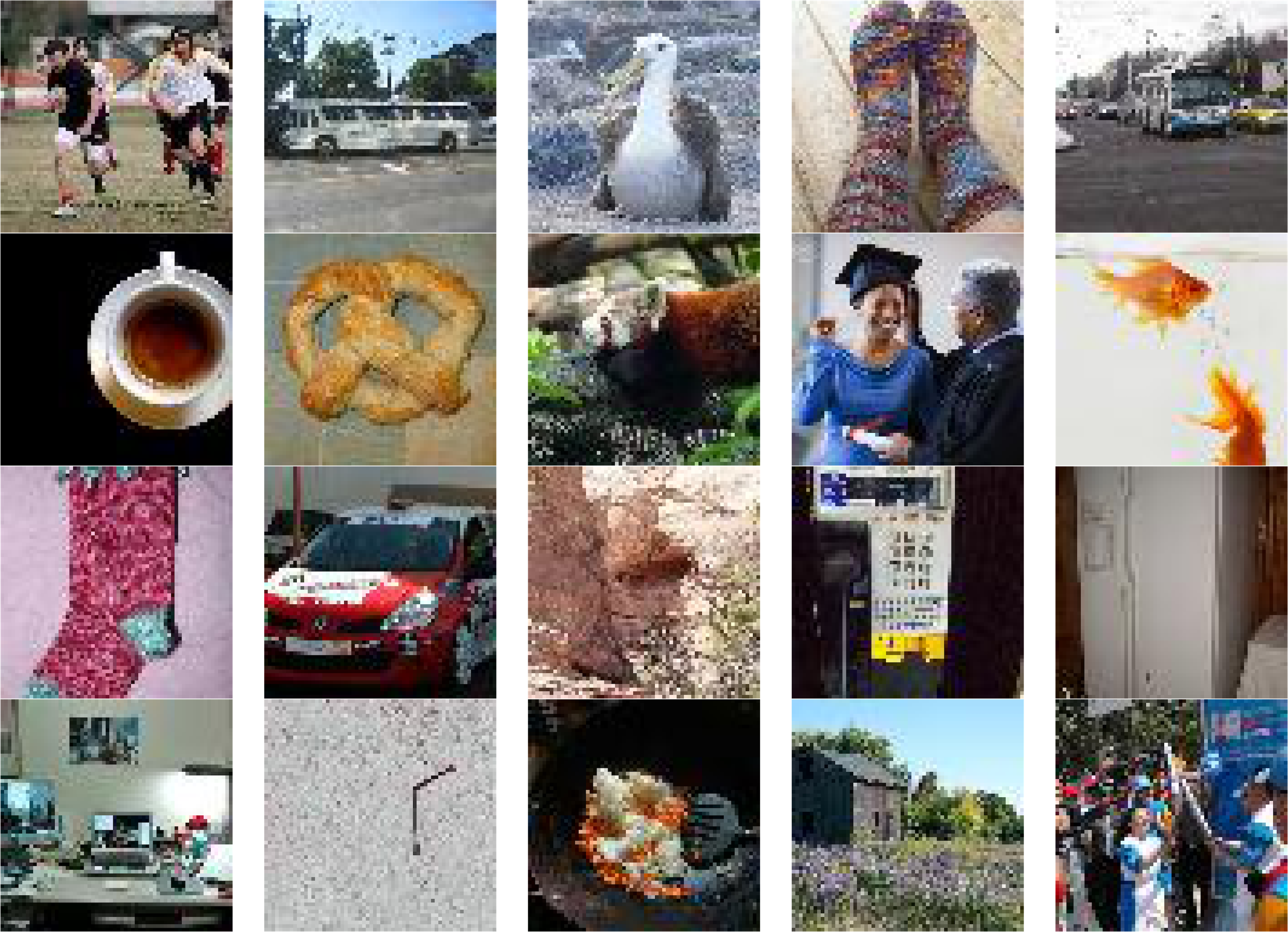}
\end{subfigure}
\hfill
\begin{subfigure}{0.3\textwidth}
\centering
\includegraphics[width=\linewidth]{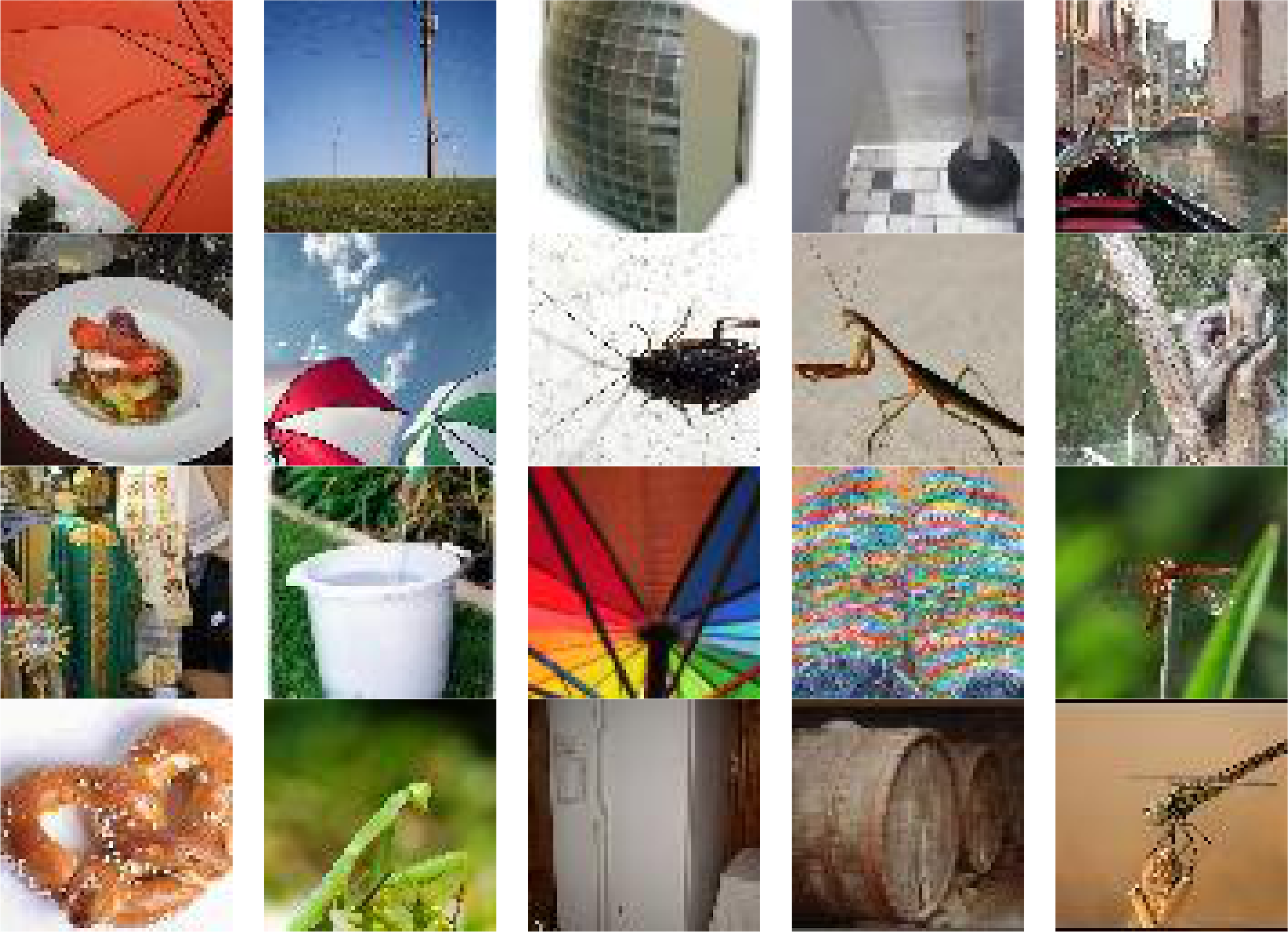}
\end{subfigure}%
\hfill
\begin{subfigure}{0.3\textwidth}%
\centering
\includegraphics[width=\linewidth]{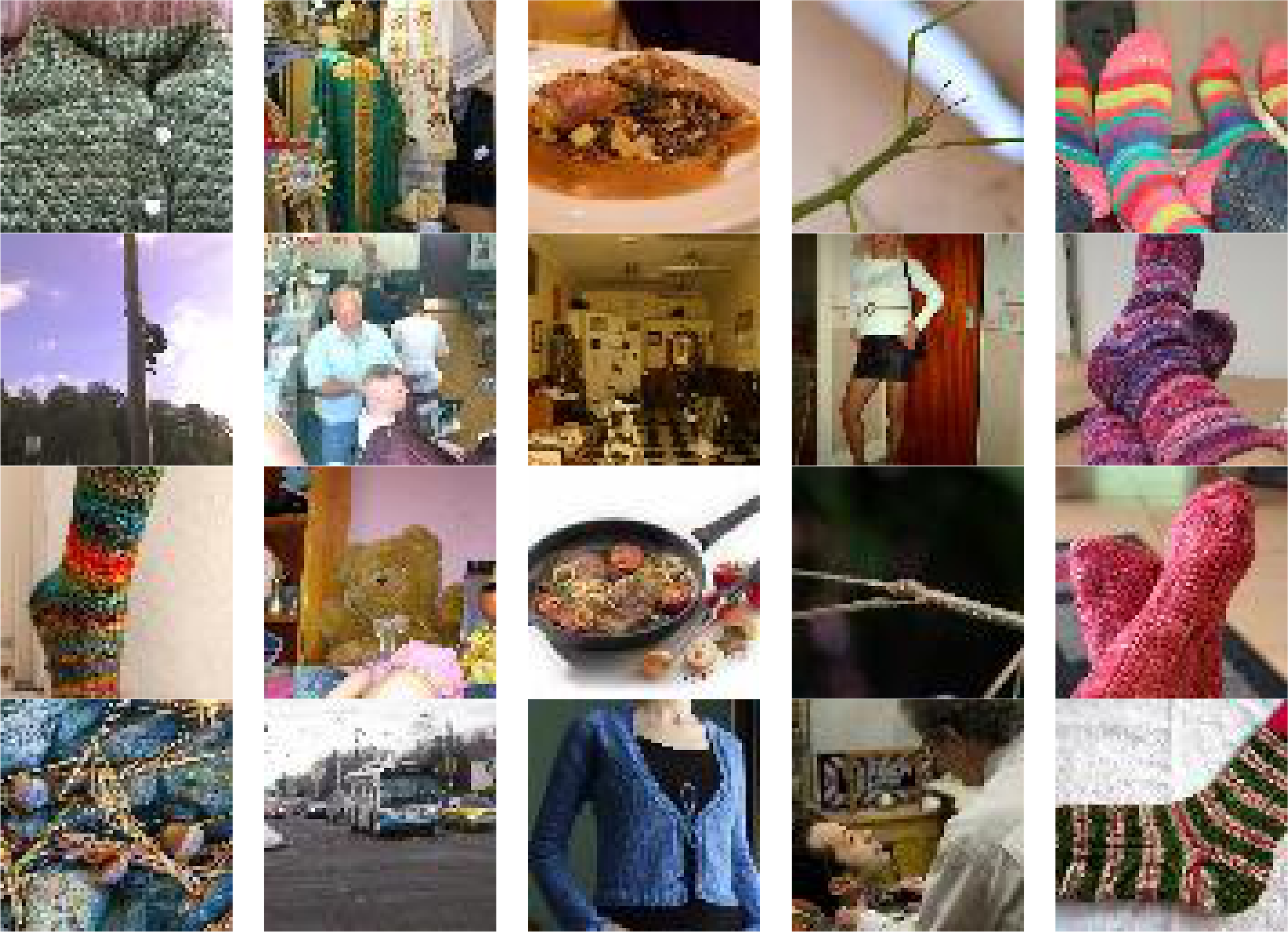}
\end{subfigure}%
\hfill

\begin{subfigure}{0.3\textwidth}
\centering
\includegraphics[width=\linewidth]{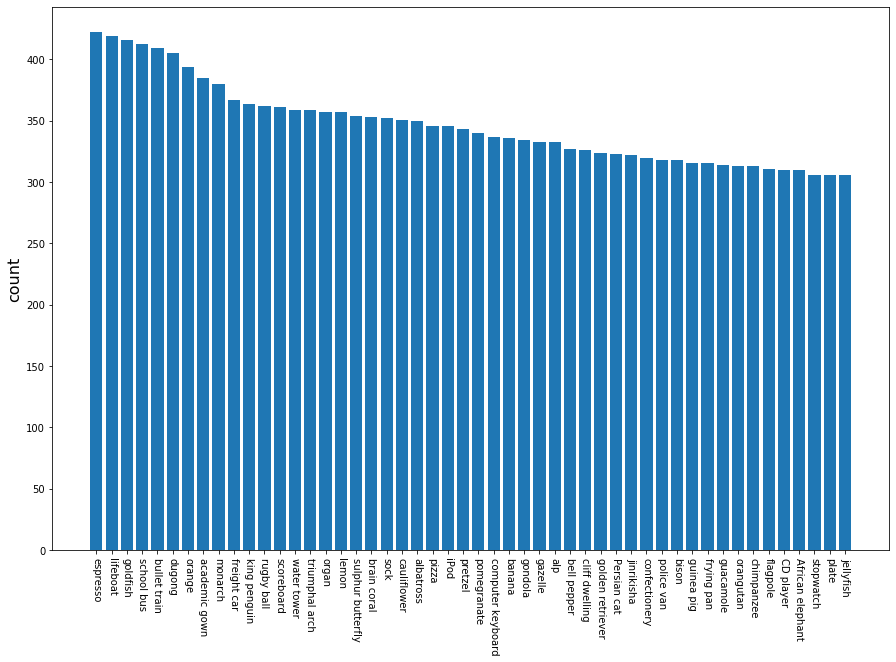}
\end{subfigure}%
\hfill
\begin{subfigure}{0.3\textwidth}
\centering
\includegraphics[width=\linewidth]{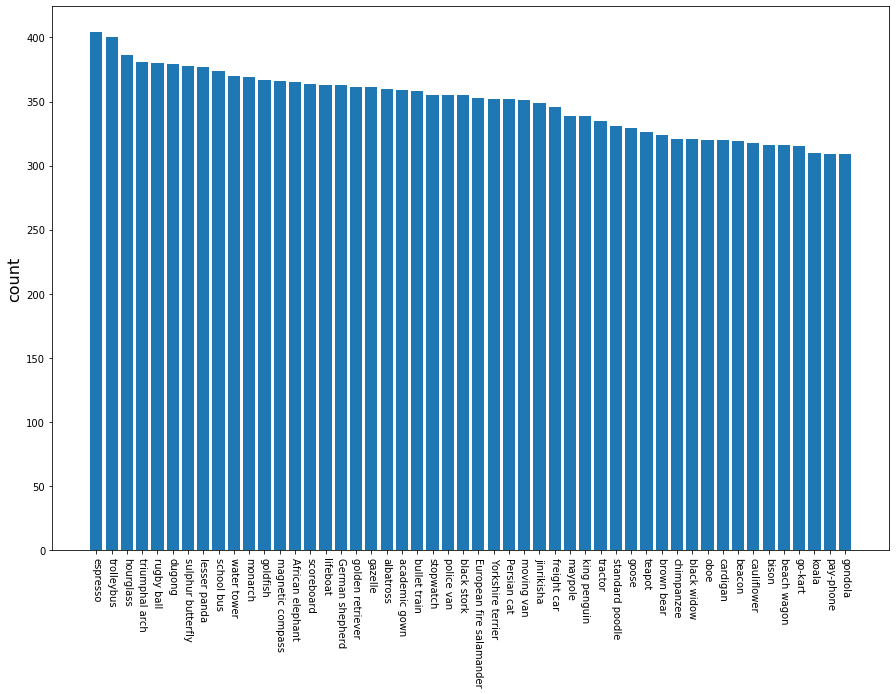}
\end{subfigure}%
\hfill
\begin{subfigure}{0.3\textwidth}
\centering
\includegraphics[width=\linewidth]{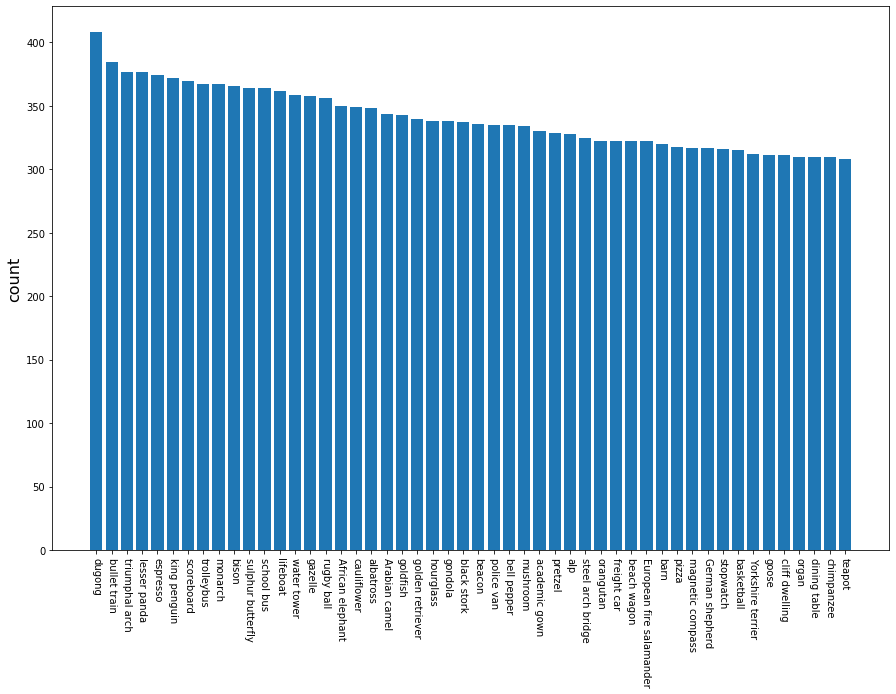}
\end{subfigure}%
\caption{\small First row: top 20 selected images with the highest Shapley values in Tiny ImageNet for Resnet18, Inception-V3, and EfficientNet B7 embeddings, respectively;
Second row: counts of images in top 50 classes after the summarization step (sorted by the count in a decreasing manner). There are many overlapped classes among different embeddings. 
}
\label{fig:ds-viz-rest}
\end{figure}


\end{document}